\let\@fnsymbol\@arabic
\newtheorem{theorem}{Theorem}
\newtheorem*{remark}{Remark}
\newacronym{OSE}{ose}{\textit{oblivious subspace embedding}}
\newacronym{SRHT}{SRHT}{Subsampled Randomized Hadamard Transform}
\newacronym{SJLT}{SJLT}{Sparse Johnson Lindenstrauss Transform}
\newcommand{\norm}[1]{\left\lVert#1\right\rVert}
\algrenewcommand{\algorithmiccomment}[1]{\hskip3em// \textit{#1}}
\DeclareMathSymbol{\mathdblquotechar}{\mathalpha}{letters}{`"}
\newcommand{\indep}{\perp \!\!\! \perp}
\DeclareMathOperator*{\argmin}{arg\,min}
\title{Higher Order Kernel Mean Embeddings to Capture Filtrations of Stochastic Processes}
\author{
  Cristopher Salvi
  \thanks{University of Oxford \& The Alan Turing Institute}\\
  \texttt{cristopher.salvi@maths.ox.ac.uk} \\

  \And
  Maud Lemercier \thanks{University of Warwick \& The Alan Turing Institute}\\
  \texttt{maud.lemercier@warwick.ac.uk}\\
  
  \And
  Chong Liu \thanks{University of Oxford}\\
  \texttt{chong.liu@maths.ox.ac.uk} \\
  
  \And
  Blanka Horvath \thanks{King's College London, The Alan Turing Institute \& Technical University of Munich}\\
  \texttt{blanka.horvath@kcl.ac.uk} \\
  
  \And
  Theodoros Damoulas \footnotemark[2] \\
  \texttt{t.damoulas@warwick.ac.uk} \\
  
  \And
  Terry Lyons \footnotemark[1] \\
  \texttt{terry.lyons@maths.ox.ac.uk}
  
}
\begin{document}

\maketitle

\begin{abstract}
  Stochastic processes are random variables with values in some space of paths. However, reducing a stochastic process to a path-valued random variable ignores its \emph{filtration}, i.e. the flow of information carried by the process through time. By conditioning the process on its filtration, we introduce a family of \emph{higher order kernel mean embeddings} (KMEs) that generalizes the notion of KME and captures additional information related to the filtration. We derive empirical estimators for the associated \emph{higher order maximum mean discrepancies} (MMDs) and prove consistency. We then construct a filtration-sensitive kernel two-sample test able to pick up information that gets missed by the standard MMD test. In addition, leveraging our higher order MMDs we construct a family of universal kernels on stochastic processes that allows to solve real-world calibration and optimal stopping problems in quantitative finance (such as the pricing of American options) via classical kernel-based regression methods. Finally, adapting existing tests for conditional independence to the case of stochastic processes, we design a causal-discovery algorithm to recover the causal graph of structural dependencies among interacting bodies solely from observations of their multidimensional trajectories.
\end{abstract}

\section{Introduction}

The idea of embedding probability distributions into a reproducing kernel Hilbert space (RKHS) via kernel mean embeddings (KMEs) has become ubiquitous in many areas of statistics and data science such as hypothesis testing \cite{gretton2012kernel,zhang2012kernel}, non-linear regression \cite{scholkopf2002learning, hofmann2008kernel}, distribution regression \cite{szabo2016learning, lemercier2021distribution} etc. Despite strong progress in the study of KMEs, most of the examples considered in the literature tend to focus on random variables supported on some finite (possibly high) dimensional euclidean spaces like $\mathbb{R}^d$. The study of KMEs for function-valued random variables has been largely ignored.   

Stochastic processes are random variables with values in some space of paths. However, reducing a stochastic process to a path-valued random variable ignores its \emph{filtration}, which can be informally thought of as the \emph{flow of information carried by the process through time}. A question that naturally emerges from the study of many random, time-evolving systems like financial markets is how does the information available up to present time affect the future evolution of the system?

Formally, this question can be addressed by conditioning a process on its filtration (\Cref{sec:cond_kme} and \ref{sec:cond_filtration}). In this paper we introduce a family of \emph{higher order KMEs} that generalizes the notion of KME to capture additional, filtration-related information (\Cref{sec:second_order} and \ref{sec:higher_order}). In view of concrete applications, we derive empirical estimators for the associated \emph{higher order MMDs} and use one of them to construct a filtration-sensitive kernel two-sample test (\Cref{sec:sample_test}) demonstrating with simulated data its ability to capture information that otherwise gets missed by the standard MMD test (\Cref{sec:hypo}). Furthermore, we construct a family of universal kernels on stochastic processes (\Cref{sec:dr}) that allows to solve challenging, real-world optimisation problems in quantitative finance such as the pricing of American options via classical kernel-based regression methods (\Cref{sec:dr_finance}). Finally, we adapt existing tests for conditional independence to the case of stochastic processes in order to design a causal-discovery algorithm able to recover the causal graph of structural dependencies among interacting bodies solely from observations of their multidimensional trajectories (\Cref{sec:causality}). 

\begin{figure}
    \centering
    \includegraphics[scale=0.9]{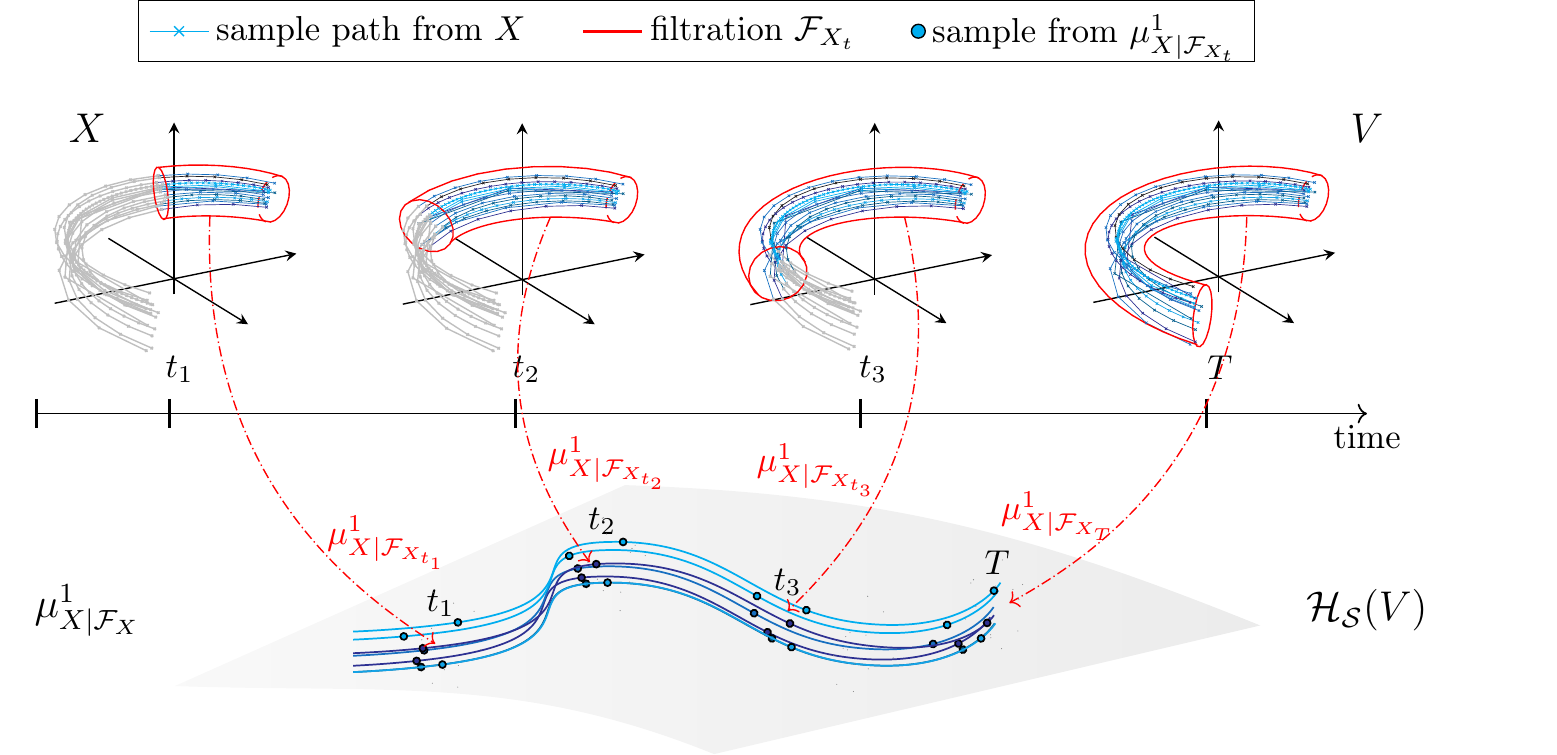}
    \caption{\small Schematic overview of the construction of the $1^{\text{st}}$ order predictive KME $\mu^1_{X|\mathcal{F}_X}$ (\Cref{sec:cond_filtration}). Here $X$ is a stochastic process with sample paths taking their values in $V$. The red contours indicate the portion of its filtration $\mathcal{F}_X$ upon which the conditioning is applied, i.e. the available information about $X$ from start up to time $t$. As explained in \Cref{sec:cond_filtration}, the $1^{\text{st}}$ order predictive KME  $\mu^1_{X|\mathcal{F}_X}$ is a path whose value at time $t$ is a $\mathcal{H}_{\mathcal{S}}(V)$-valued random variable representing the law of $X$ conditioned on its filtration $\mathcal{F}_{X_t}$. Equivalently $\mu^1_{X|\mathcal{F}_X}$ is a stochastic process with sample paths taking their values in $\mathcal{H}_{\mathcal{S}}(V)$.}
    \label{fig:main}
\end{figure}

\subsection{Related work}

The notion of conditioning is a powerful probabilistic tool allowing to understand possibly complex, non-linear interactions between random variables. As their unconditional counterparts, conditional distributions can also be embedded into RKHSs \cite{song2013kernel}. Recently, conditional KMEs have received increased attention, especially in the context of graphical models \cite{song2010nonparametric}, state-space models \cite{fukumizu2013kernel}, dynamical systems \cite{song2009hilbert}, causal inference \cite{mitrovic2018causal, tillman2009nonlinear,sun2007kernel}, two-sample and conditional independence hypothesis testing \cite{fukumizu2007kernel,sun2007kernel,park2020measure} and others. Embeddings of distributions via KMEs have also shown their success in the context of distribution regression (DR), which is the task of learning a function mapping a collection of samples from a probability distribution to scalar targets \cite{law2018bayesian, muandet2012learning, smola2007hilbert}. More recently, a framework for DR that addresses the setting where inputs are sample paths from an underlying stochastic process is proposed in \cite{lemercier2021distribution}. The authors make extensive use of the \emph{signature transform} \cite{lyons2014rough, bonnier2019deep} and of the \emph{signature kernel} \cite{kiraly2019kernels, cass2020computing}, two well established tools in stochastic analysis.

When it comes to stochastic processes, it was first shown in \cite{Aldous81adapt} that weak convergence of random variables does not always account for the information contained in the filtration, as highlighted by means of numerous numerical examples in \cite{Pflug2012multi, Backhoff2021adapted}. This limitation is addressed in \cite{Aldous81adapt, Hoover84adapt} through the construction of a sequence of so--called \emph{adapted topologies}\footnote{We say that a sequence of random variables $\{X_n\}_{n \in \mathbb{N}}$ converges to a random variable $X$ in the topology $\tau$ if and only if for every $\tau$-open neighbourhood $\mathbb{U}$ of X there exists $N \in \mathbb{N}$ such that $X_n \in \mathbb{U}$ as soon as $n\geq N$.} $(\tau_n)_{n \geq 1}$ that become progressively finer\footnote{A topology $\tau_1$ is said to be finer than a topology $\tau_2$ if every $\tau_2$-open set is also $\tau_1$-open.} as $n$ increases (with $\tau_1$ coinciding with the weak topology). In particular, higher order adapted topologies are shown to capture more filtration-related information than their weak counterpart. This characteristic becomes relevant for example in some optimal stopping problems such as the pricing of American options, where the pricing function can be shown to be discontinuous with respect to the weak topology, but is continuous with respect to the second order adapted topology\footnote{The second order adapted topology $\tau_2$ is equivalent to the \emph{adapted Wasserstein distance} \cite{Backhoff2019adapted}.} \cite{Pflug2012multi, backhoff2020all, Backhoff2019adapted}. Leveraging properties of the signature transform, it has been shown that adapted topologies are intimately related to a family of higher order MMDs \cite{bonnier2020adapted}. However, providing empirical estimators for these discrepancies that can be deployed on real-world tasks remains a challenge. In this paper we propose to address this challenge by presenting an alternative construction to this higher order MMDs using the language of kernels and KMEs. The results in \cite{bonnier2020adapted} serve as a strong theoretical background for the present paper. 
%A more in-depth review of related work can be found in the Appendix.

\section{Preliminaries}

We begin with a brief summary of tools from stochastic analysis needed to define higher order KMEs. 
Let $\mathcal{X}(\mathbb{R}^d) = \{x : [0,T] \to \mathbb{R}^d\}$ a compact set of continuous, piecewise linear, $\mathbb{R}^d$-valued paths defined over a common time interval $[0,T]$, obtained for example by linearly interpolating a multivariate time series. For any path $x \in \mathcal{X}(\mathbb{R}^d)$ we denote its $k^{\text{th}}$ coordinate by $x^{(k)} : [0,T] \to \mathbb{R}$, for $k \in \{1,\ldots,d\}$. More generally we denote by $\mathcal{X}(V) = \{x : [0,T] \to V\}$ a compact set of continuous, piecewise linear paths with values in a Hilbert space $V$ with a countable basis. 

\subsection{The signature transform}

The \textit{signature transform} $\mathcal{S}:\mathcal{X}(V) \to H_{\mathcal{S}}(V)$ is a \emph{feature map} defined for any path $x \in \mathcal{X}(V)$ as the following infinite collection of statistics
\begin{equation}
    \mathcal{S}(x) = \left(1, \left\{\mathcal{S}(x)^{(k_1)}\right\}_{k_1=1}^d, \left\{\mathcal{S}(x)^{(k_1,k_2)}\right\}_{k_1,k_2=1}^d, \ldots \right)
\end{equation}
where each term is a real number equal to the iterated integral 
\begin{equation}\label{eqn:sig}
\mathcal{S}(x)^{(k_1,\ldots,k_j)} =  \underset{0<s_1<\ldots<s_j<T}{\int \ldots \int} d x^{(k_1)}_{s_1} \ldots dx^{(k_j)}_{s_j}
\end{equation}
The signature \emph{feature space} $H_{\mathcal{S}}(V)$ is defined as the following direct sum of tensor powers of $V$
\begin{equation}
    H_{\mathcal{S}}(V) = \bigoplus_{k=0}^\infty V^{\otimes k} = \mathbb{R} \oplus V\oplus (V)^{\otimes 2} \oplus \ldots 
\end{equation}
where $\otimes$ denotes the standard tensor product of vector spaces \cite{lyons1998differential, lyons2014rough}.

\subsection{The signature kernel}

Because $V$ is Hilbert $H_{\mathcal{S}}(V)$ is also Hilbert \cite{kiraly2019kernels}. The \emph{signature kernel} $k_{\mathcal{S}}:\mathcal{X}(V) \times \mathcal{X}(V) \to \mathbb{R}$ is a characteristic kernel defined for any pair of paths $x,y \in \mathcal{X}(V)$ as the following inner product
\begin{align}\label{eq:kernel}
    k_{\mathcal{S}}(x,y) = \left\langle \mathcal{S}(x), \mathcal{S}(y) \right\rangle_{H_{\mathcal{S}}(V)} 
\end{align} 
The recent article \cite{cass2020computing} establishes a surprising connection between the signature kernel and a certain class of partial differential equations (PDEs), culminating in the following kernel trick for $k_{\mathcal{S}}$.

\begin{theorem}\cite[Thm. 2.5]{cass2020computing}\label{thm:sig_PDE}
For any $x,y \in \mathcal{X}(V)$ the signature kernel satisfies the equation $k_{\mathcal{S}}(x,y)=u_{x,y}(T,T)$, where $u_{x,y} : [0,T] \times [0,T] \to \mathbb{R}$ is the solution of the hyperbolic PDE
\begin{equation}\label{eq:sig_PDE}
    \frac{\partial^2 u_{x,y}}{\partial s \partial t} = \left\langle \dot x_s, \dot y_t \right\rangle_V u_{x,y}
\end{equation}
with boundary conditions $u_{x,y}(0,\cdot)=u_{x,y}(\cdot,0)=1$ and where $\dot z_s=\frac{d z_r}{dr}\big|_{r=s}$.
\end{theorem}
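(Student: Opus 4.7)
Proof plan.

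The strategy is to replace the two fixed paths $x,y$ by their restrictions to $[0,s]$ and $[0,t]$ and study the resulting two-parameter function, then differentiate. Concretely, define
\begin{equation*}
u_{x,y}(s,t) \;=\; \bigl\langle \mathcal{S}(x|_{[0,s]}),\, \mathcal{S}(y|_{[0,t]})\bigr\rangle_{H_{\mathcal{S}}(V)}.
\end{equation*}
At $(T,T)$ this is exactly $k_{\mathcal{S}}(x,y)$. The boundary conditions are immediate: the signature of any path on a degenerate interval is the unit tensor $(1,0,0,\ldots)$, so $u_{x,y}(0,t)=u_{x,y}(s,0)=1$. What remains is to verify the PDE.

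The key ingredient is the Chen differential identity for the signature, which for a bounded variation path reads $d\mathcal{S}(x|_{[0,s]}) = \mathcal{S}(x|_{[0,s]}) \otimes dx_s$, and in particular, for piecewise linear $x$, gives the pointwise derivative $\partial_s \mathcal{S}(x|_{[0,s]}) = \mathcal{S}(x|_{[0,s]}) \otimes \dot{x}_s$ almost everywhere. An analogous relation holds for $y$ in $t$. Combining these, and using that the inner product on $H_{\mathcal{S}}(V) = \bigoplus_k V^{\otimes k}$ factorises on pure tensors as $\langle a \otimes v,\, b \otimes w\rangle = \langle a,b\rangle \langle v,w\rangle_V$, a short computation yields
\begin{equation*}
\frac{\partial^2 u_{x,y}}{\partial s \, \partial t}
\;=\;
\bigl\langle \mathcal{S}(x|_{[0,s]}) \otimes \dot{x}_s,\, \mathcal{S}(y|_{[0,t]}) \otimes \dot{y}_t\bigr\rangle
\;=\;
\langle \dot{x}_s, \dot{y}_t\rangle_V \, u_{x,y}(s,t),
\end{equation*}
which is exactly the claimed equation. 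Equivalently, one can carry out the same argument coordinate-wise: differentiating the iterated integral definition \eqref{eqn:sig} gives $\partial_s \mathcal{S}(x|_{[0,s]})^{(k_1,\ldots,k_j)} = \mathcal{S}(x|_{[0,s]})^{(k_1,\ldots,k_{j-1})} \dot{x}^{(k_j)}_s$, and after plugging into the series expansion of $u_{x,y}$ and separating off the last index the sum reassembles to $\langle \dot{x}_s, \dot{y}_t\rangle_V u_{x,y}(s,t)$.

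The only nontrivial point is justifying that differentiation commutes with the infinite sum defining the inner product. This is where I expect the main technical obstacle to lie. The standard remedy is the factorial-decay bound on signature levels of a bounded variation path: if $L_x, L_y$ denote the $1$-variations of $x,y$ on $[0,T]$, then the $k$-th tensor level is bounded in norm by $L_x^k/k!$ (resp.\ $L_y^k/k!$), so both the series for $u_{x,y}$ and the formal differentiated series converge absolutely and uniformly on $[0,T]^2$ (using that $\|\dot x\|_V$ and $\|\dot y\|_V$ are essentially bounded for piecewise linear paths). This legitimises term-wise differentiation and establishes that $u_{x,y}$ is the classical solution of \eqref{eq:sig_PDE} almost everywhere with the stated boundary data; uniqueness for this linear hyperbolic equation with Goursat data then pins down $u_{x,y}$ and completes the identification $k_{\mathcal{S}}(x,y)=u_{x,y}(T,T)$.
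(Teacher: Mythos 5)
The paper does not give a proof of this theorem: it is imported verbatim as Theorem~2.5 of \cite{cass2020computing} and used as an external black box, so there is no in-paper argument to compare against. Your proof is, up to presentation, the one given in that reference, and it is correct. The ingredients you identify are exactly the right ones: define $u_{x,y}(s,t)=\langle\mathcal{S}(x|_{[0,s]}),\mathcal{S}(y|_{[0,t]})\rangle_{H_{\mathcal{S}}(V)}$, observe that $u_{x,y}(0,\cdot)=u_{x,y}(\cdot,0)=1$ because the signature of a trivial path is $(1,0,0,\ldots)$, and differentiate using the signature ODE $\partial_s\mathcal{S}(x|_{[0,s]})=\mathcal{S}(x|_{[0,s]})\otimes\dot x_s$ (which holds pointwise off the finite set of breakpoints for piecewise linear $x$). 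One small step worth making explicit: since $\mathcal{S}(x|_{[0,s]})\otimes\dot x_s=\sum_{k\ge0}\bigl(S_k^x(s)\otimes\dot x_s\bigr)$ with $S_k^x(s)\otimes\dot x_s\in V^{\otimes(k+1)}$, and distinct tensor levels are orthogonal in $H_{\mathcal{S}}(V)$, the inner product collapses level-by-level via $\langle a\otimes v,b\otimes w\rangle_{V^{\otimes(k+1)}}=\langle a,b\rangle_{V^{\otimes k}}\langle v,w\rangle_V$, which is precisely why the sum reassembles into $\langle\dot x_s,\dot y_t\rangle_V\,u_{x,y}(s,t)$ rather than something more complicated; you gesture at this but do not spell out that the cross-terms between different grades vanish. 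Your appeal to the factorial decay bound $\|S_k(x|_{[0,s]})\|\le L_x^k/k!$ is the standard and correct justification for term-wise differentiation, and invoking uniqueness for the linear Goursat problem to pin down $u_{x,y}(T,T)=k_{\mathcal{S}}(x,y)$ closes the argument. No gap.
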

Hence, evaluating $k_{\mathcal{S}}$ at a pair of paths $(x,y)$ is equivalent to solving the PDE (\ref{eq:sig_PDE}); in this paper we solve PDEs numerically via a finite difference scheme (see \Cref{sec:algo} for additional details). In what follows, we denote by $\mathcal{H}_{\mathcal{S}}(V)$ the RKHS associated to $k_{\mathcal{S}}$.

\subsection{Stochastic processes and filtrations}

We take $(\Omega, \mathcal{F}, \mathbb{P})$ as the underlying probability space. A \emph{(discrete time) stochastic process} $X$ is a random variable with values on $\mathcal{X}(V)$. We denote by $\mathbb{P}_X=\mathbb{P} \circ X^{-1}$ the \emph{law} of $X$. Assuming the integrability condition $\mathbb{E}_X[k_\mathcal{S}(X,X)] < \infty$, the \emph{$1^{\text{st}}$ order kernel mean embedding} (KME) of $X$ is defined as\footnote{The $1^{\text{st}}$ order KME is the standard KME with the signature kernel $k_{\mathcal{S}}$.} the following point in $\mathcal{H}_{\mathcal{S}}(V)$
\begin{equation}
    \mu^{1}_X = \mathbb{E}_X[k_{\mathcal{S}}(X, \cdot)] = \int_{x \in \mathcal{X}(V)} k_{\mathcal{S}}(x,\cdot)\mathbb{P}_X(dx)
\end{equation}
Accordingly, given two stochastic processes $X,Y$, their \emph{$1^{\text{st}}$ order maximum mean discrepancy} (MMD) is the standard MMD distance with kernel $k_\mathcal{S}$ given by the following expression
\begin{equation}
    \mathcal{D}^1_{\mathcal{S}}(X,Y) = \norm{\mu^{1}_X - \mu^{1}_Y}_{\mathcal{H}_{\mathcal{S}}(V)}
\end{equation}
Because the signature kernel $k_\mathcal{S}$ is characteristic, it is a classical result \cite{gretton2012kernel, chevyrev2018signature} that the $1^{\text{st}}$ order MMD is a sufficient statistics to distinguish between the laws of $X$ and $Y$, in other words 
\begin{equation}
    \mathcal{D}^1_{\mathcal{S}}(X,Y) = 0 \iff \mathbb{P}_X = \mathbb{P}_Y
\end{equation}
Despite the fact that stochastic processes are path-valued random variables, they encode a much richer structure compared to standard $\mathbb{R}^d$-valued random variables, that goes well beyond their laws. This additional structure is described mathematically by the concept of \emph{filtration} of a process $X$, defined as the following family of $\sigma$-algebras 
\begin{equation}
    \mathcal{F}_X=(\mathcal{F}_{X_t})_{t \in [0,T]},
\end{equation}
where for any $t \in [0,T]$, $\mathcal{F}_{X_t}$ is the $\sigma$-algebra generated by the variables $\{X_s\}_{s \in [0,t]}$. Note that $\mathcal{F}_X$ is totally ordered in the sense that $\mathcal{F}_{X_s} \subset \mathcal{F}_{X_t} \text{ for all } s < t$, which naturally explains why filtrations are good mathematical descriptions to model the flow information carried by the process $X$.

In the next section, we will present our main findings and introduce a family of \emph{higher order KMEs} and corresponding \emph{higher order MMDs} as generalizations of the standard KME and MMD respectively. We will do so by conditioning stochastic processes on elements of their filtrations. 

% The notion of 

\section{Higher order kernel mean embeddings}\label{sec:main_sec}

We begin by describing how KMEs can be extended to conditional laws of stochastic processes. 

\subsection{Conditional kernel mean embeddings for stochastic processes}\label{sec:cond_kme}

% Conditional KMEs have received a lot of attention in the past decade and have found numerous applications in graphical models, conditional independence measures and causal discovery to cite a few \cite{grunewalder2012conditional, park2020measure, park2021conditional}. In this section we use the signature kernel to define conditional KMEs for stochastic processes. 

Let $X,Y$ be two stochastic processes. For a given path $x \in \mathcal{X}(V)$, define the \emph{$1^{\text{st}}$ order conditional kernel mean embeddings} $\mu^{1}_{Y \mid X=x} \in \mathcal{H}_{\mathcal{S}}(V)$ and $\mu^{1}_{Y \mid X} : \mathcal{H}_{\mathcal{S}}(V) \to \mathcal{H}_{\mathcal{S}}(V)$ as follows
\begin{equation}\label{eqn:CKME_x}
    \mu^{1}_{Y \mid X=x} = \mathbb{E}[k_{\mathcal{S}}(Y,\cdot)|X=x]=\int_{y \in \mathcal{X}(V)} k_{\mathcal{S}}(\cdot,y)\mathbb{P}_{Y \mid X=x}(dy) 
\end{equation}

\begin{equation}\label{eqn:CKME_X}
    \mu^{1}_{Y \mid X} = \mathbb{E}[k_{\mathcal{S}}(Y,\cdot)|X]=\int_{y \in \mathcal{X}(V)} k_{\mathcal{S}}(\cdot,y)\mathbb{P}_{Y \mid X}(dy)
\end{equation}

Note that whilst $\mu^{1}_{Y \mid X=x}$ is a single point in $\mathcal{H}_{\mathcal{S}}(V)$, the $1^{\text{st}}$ order conditional KME $\mu^{1}_{Y \mid X}$ describes a cloud of points on $\mathcal{H}_\mathcal{S}(V)$. Each point in this cloud is indexed by a path $x \in \mathcal{X}(V)$. Equivalently, $\mu^{1}_{Y \mid X}$ constitutes a $\mathbb{P}_X$-measurable, $\mathcal{H}_\mathcal{S}(V)$-valued random variable. 

% The $1^{\text{st}}$ order conditional KMEs of equations (\ref{eqn:CKME_x}) and (\ref{eqn:CKME_X}) satisfy for any $x \in \mathcal{X}(V)$ the following relation \cite{park2020measure}
% \begin{equation}\label{eq:CKME_rel}
%     \mu^{1}_{Y \mid X=x} = \mu^{1}_{Y \mid X}k_{\mathcal{S}}(x,\cdot)
% \end{equation}

These embeddings allow to extend the applications of conditional KMEs to the case where the random variables are (possibly multidimensional) stochastic processes. In particular one can directly obtain conditional independence criterions for stochastic processes (see \Cref{ssec:HSICstopro}), enabling to deploy standard kernel-based causal learning algorithms \cite{sun2007kernel}, as we demonstrate in \Cref{sec:applications}. Next we describe how in the case of stochastic processes, conditioning on filtrations is an important mathematical operation to model real-world time-evolving systems.

\subsection{Conditioning stochastic processes on their filtrations}\label{sec:cond_filtration}

Financial markets are examples of complex dynamical systems that evolve under the influence of randomness. An important objective for financial practitioners is to determine how actionable information available up to present could affect the future market trajectories. The task of \emph{conditioning on the past to describe the future} of a stochastic process $X$ can be formulated mathematically by conditioning $X$ on its filtration $\mathcal{F}_{X_t}$ for any time $t \in [0,T]$.

More precisely, consider the $1^{\text{st}}$ order KME of the conditional law $\mathbb{P}_{X \mid \mathcal{F}_{X_t}}$, which is defined as the following $\mathcal{F}_{X_t}$-measurable, $\mathcal{H}_{\mathcal{S}}(V)$-valued random variable
\begin{equation}\label{eqn:CSKME_F_t}
    \mu^{1}_{X \mid \mathcal{F}_{X_t}} = \mathbb{E}[k_{\mathcal{S}}(X,\cdot)|X_{[0,t]}] = \int_{x \in \mathcal{X}(V)} k_{\mathcal{S}}(\cdot,x)\mathbb{P}_{X \mid \mathcal{F}_{X_t}}(dx)
\end{equation}
where $X_{[0,t]}$ denotes the stochastic process $X$ restricted to the sub-interval $[0,t ] \subset [0,T]$. By varying the time index $t$, we can form the following ordered collection of $1^{\text{st}}$ order KMEs 
\begin{equation}\label{eqn:order-1-pred}
    \mu^{1}_{X \mid \mathcal{F}_X} = \left(\mu^{1}_{X \mid \mathcal{F}_{X_t}}\right)_{t \in [0,T]}
\end{equation}
that we term \emph{$1^{\text{st}}$ order predictive KME} of the process $X$. By construction, $\mu^{1}_{X \mid \mathcal{F}_X}$ describes a path taking its values in the space of $\mathcal{H}_{\mathcal{S}}(V)$-valued random variables, in other words it is itself a stochastic process \footnote{Because $\mathbb{P}_X = \mathbb{P}_{X \mid \mathcal{F}_{X_T}}$, all the information about the law of $X$ is contained in just the terminal point of the trajectory traced by $\mu^{1}_{X \mid \mathcal{F}_X}$ (\Cref{fig:main}).} (see \Cref{fig:main}). Hence, the law of $\mu^{1}_{X \mid \mathcal{F}_X}$ can itself be embedded via KMEs into a "higher-order RKHS" (see next section), making the full procedure iterable, as we shall discuss next.

We note that for each time $t$, the random variable $\mu^{1}_{X \mid \mathcal{F}_{X_t}}$ in \cref{eqn:CSKME_F_t} is the Bochner integral of 
$k_{\mathcal{S}}(\cdot,x)$ with respect to the probability measure $\mathbb{P}_{X \mid \mathcal{F}_{X_t}}$. Since we assumed that 
$V$ is a compact set, the path space $\mathcal{X}(V)$ is also compact. Hence, the function $x \mapsto k_{\mathcal{S}}(\cdot,x)$ is continuous, the set $K=\{k_\mathcal{S}(\cdot,x) : x \in \mathcal{X}(V)\}$ is compact as continuous image of a compact set, and therefore its Bochner integral $\mu^{1}_{X \mid \mathcal{F}_{X_t}}$ takes values in the closed convex hull of $K$, which is again a compact subset in the RKHS $\mathcal{H}_\mathcal{S}(V)$. Consequently the path  $t \mapsto \mu^{1}_{X \mid \mathcal{F}_{X_t}}$ belongs to a compact subset of $\mathcal{X}(\mathcal{H}_\mathcal{S}(V))$, which satisfies the assumptions introduced in Section 2.

\subsection{Second order kernel mean embedding and maximum mean discrepancy}\label{sec:second_order}

The \emph{$2^{\text{nd}}$ order KME} is the point in $\mathcal{H}_{\mathcal{S}}(\mathcal{H}_{\mathcal{S}}(V))$ defined as the KME of the $1^{\text{st}}$ order predictive KME
% \footnote{Where $\mathbb{P}_{\mu^{1}_{X \mid \mathcal{F}_X}}=\mathbb{P} \circ (\mu^{1}_{X \mid \mathcal{F}_X})^{-1}$ denotes the law of the stochastic process $\mu^{1}_{X \mid \mathcal{F}_X}$.}
\begin{equation}
    \mu^2_X = \int_{x \in \mathcal{X}(\mathcal{H}_{\mathcal{S}}(V))} k_{\mathcal{S}}(\cdot,x)\mathbb{P}_{\mu^{1}_{X \mid \mathcal{F}_X}}(dx)
\end{equation}
The \emph{$2^{\text{nd}}$ order MMD} of $X,Y$ is the norm of the difference in $\mathcal{H}_{\mathcal{S}}(\mathcal{H}_{\mathcal{S}}(V))$ of their $2^{\text{nd}}$ order KMEs,
\begin{equation}\label{eq: MMD 1}
    \mathcal{D}_{\mathcal{S}}^2(X,Y) = \norm{\mu^2_X - \mu^2_Y}_{\mathcal{H}_{\mathcal{S}}(\mathcal{H}_{\mathcal{S}}(V))}
\end{equation}
The next theorem states that the $2^{\text{nd}}$ order MMD of two stochastic processes $X,Y$ is a stronger discrepancy measure than the $1^{\text{st}}$ order MMD. 

\begin{theorem}\label{thm:mmd_1_test}
Given two stochastic processes $X,Y$
\begin{align}
    \mathcal{D}^2_{\mathcal{S}}(X,Y)  = 0 
    \iff \mathbb{P}_{X\mid \mathcal{F}_{X}}  = \mathbb{P}_{Y \mid \mathcal{F}_Y} \quad
\end{align}
Furthermore
\begin{equation}
    \mathcal{D}^2_{\mathcal{S}}(X,Y) = 0 \implies \mathcal{D}^1_{\mathcal{S}}(X,Y) = 0
\end{equation}
but the converse is not generally true.
\end{theorem}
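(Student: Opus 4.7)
The statement splits into a characterisation of when $\mathcal{D}^2_{\mathcal{S}} = 0$, an implication between the two discrepancies, and a non-reversibility claim. For the equivalence, my starting point is to read $\mathcal{D}^2_{\mathcal{S}}(X,Y)$ as the standard $1^{\text{st}}$ order signature MMD between the laws of $\mu^1_{X \mid \mathcal{F}_X}$ and $\mu^1_{Y \mid \mathcal{F}_Y}$, now viewed as probability measures on $\mathcal{X}(\mathcal{H}_\mathcal{S}(V))$. The final observation of \Cref{sec:cond_filtration} ensures these $\mathcal{H}_\mathcal{S}(V)$-valued paths live in a compact subset of a Hilbert space with countable basis, so the hypotheses of \Cref{sec:main_sec} are met and $k_\mathcal{S}$ remains characteristic at this ``second level''. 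This yields
\begin{equation*}
\mathcal{D}^2_{\mathcal{S}}(X,Y) = 0 \iff \mathbb{P}_{\mu^1_{X \mid \mathcal{F}_X}} = \mathbb{P}_{\mu^1_{Y \mid \mathcal{F}_Y}}.
\end{equation*}
To upgrade the right-hand side to $\mathbb{P}_{X \mid \mathcal{F}_X} = \mathbb{P}_{Y \mid \mathcal{F}_Y}$, I would exploit the injectivity of the map $\nu \mapsto \int k_\mathcal{S}(\cdot,x)\,\nu(dx)$ from probability measures on $\mathcal{X}(V)$ into $\mathcal{H}_\mathcal{S}(V)$, which is a consequence of $k_\mathcal{S}$ being characteristic at the first level. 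For each $t$, $\mu^1_{X \mid \mathcal{F}_{X_t}}$ then encodes $\mathbb{P}_{X \mid \mathcal{F}_{X_t}}$ without loss, so equality in law of the $\mathcal{H}_\mathcal{S}(V)$-valued trajectories transfers to equality of the corresponding systems of regular conditional probabilities along the filtration.

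For the implication $\mathcal{D}^2_{\mathcal{S}} = 0 \Rightarrow \mathcal{D}^1_{\mathcal{S}} = 0$, I would inspect the terminal time $t=T$. Since $X$ is $\mathcal{F}_{X_T}$-measurable we have $\mathbb{P}_{X \mid \mathcal{F}_{X_T}} = \delta_X$ almost surely, so $\mu^1_{X \mid \mathcal{F}_{X_T}} = k_\mathcal{S}(X,\cdot)$ a.s.\ and likewise for $Y$. Equality in law of the full predictive KME paths then forces $k_\mathcal{S}(X,\cdot)$ and $k_\mathcal{S}(Y,\cdot)$ to coincide in law as $\mathcal{H}_\mathcal{S}(V)$-valued random variables, and passing to Bochner expectations gives $\mu^1_X = \mu^1_Y$, i.e.\ $\mathcal{D}^1_{\mathcal{S}}(X,Y) = 0$. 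For the non-reversibility claim, I would invoke the classical counterexamples of \cite{Aldous81adapt,Pflug2012multi,Backhoff2021adapted,backhoff2020all}: pairs of stochastic processes with identical laws but genuinely different filtration content, for instance a simple binomial-tree construction in which one process reveals information earlier than the other. By the equivalence just established, any such pair realises $\mathcal{D}^2_{\mathcal{S}}(X,Y) > 0$ while $\mathcal{D}^1_{\mathcal{S}}(X,Y) = 0$.

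The main technical obstacle is the ``upgrade step'' inside the equivalence: making rigorous the identification between equality in law of the $\mathcal{H}_\mathcal{S}(V)$-valued predictive KME processes and equality of the conditional laws $\mathbb{P}_{X \mid \mathcal{F}_{X_t}}$ viewed as processes of random measures. This requires care with measurable selections of regular conditional probabilities and with the Borel structure on the space of probability measures over $\mathcal{X}(V)$, together with a check that the mapping $t \mapsto \mu^1_{X \mid \mathcal{F}_{X_t}}$ is itself a bona fide $\mathcal{X}(\mathcal{H}_\mathcal{S}(V))$-valued random element. Once this is in place, the rest of the argument reduces to direct applications of the characteristic property of $k_\mathcal{S}$ and the tower property of conditional expectation.
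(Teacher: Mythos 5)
Your proposal is correct and follows essentially the same route as the paper's own proof: both reduce $\mathcal{D}^2_{\mathcal{S}}=0$ to equality of the laws of the first-order predictive KME processes via the characteristic property of $k_{\mathcal{S}}$ at the second level, and then upgrade to equality of the conditional laws via injectivity of the KME at the first level (the paper phrases this as a pair of homeomorphisms $\mathbb{P}_{X\mid\mathcal{F}_{X_t}}\leftrightarrow\mu^1_{X\mid\mathcal{F}_{X_t}}$ and $\mathbb{P}_{\mu^1_{X\mid\mathcal{F}_X}}\leftrightarrow\mu^2_X$, citing the signature-kernel characteristicness result in Chevyrev--Oberhauser). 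Your terminal-time argument for $\mathcal{D}^2_{\mathcal{S}}=0\Rightarrow\mathcal{D}^1_{\mathcal{S}}=0$ is a more explicit version of the paper's one-line observation that $\mathbb{P}_{X\mid\mathcal{F}_X}=\mathbb{P}_{Y\mid\mathcal{F}_Y}$ implies $\mathbb{P}_X=\mathbb{P}_Y$, and your appeal to the binomial-tree-style counterexamples matches the paper's citation of Hoover--Keisler Example 3.1.
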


\begin{proof}
All proofs are given in \Cref{sec:proofs}.
\end{proof}

Next we make use of Thm. \ref{thm:mmd_1_test} in the context of two-sample hypothesis testing \cite{gretton2012kernel, chevyrev2018signature} for stochastic processes. In Sec. \ref{sec:applications} we will show by means of a numerical example that the $2^{\text{nd}}$ order MMD is able to capture filtration-related information otherwise ignored by the $1^{\text{st}}$ order MMD.

\subsection{A filtration-sensitive kernel two-sample test}\label{sec:sample_test}

Suppose we are given $m$ sample paths $\{x^i\}_{i=1}^m \sim X$ and $n$ sample paths $\{y^i\}_{i=1}^n \sim Y$. A classical two-sample test \cite{gretton2012kernel} for $X,Y$ tests a null-hypothesis 
\begin{equation}
    H_0 : \mathbb{P}_X = \mathbb{P}_Y \ \text{ against the alternative } \ H_A : \mathbb{P}_X \neq \mathbb{P}_Y
\end{equation}
The probability of falsely rejecting the null is called the \emph{type I error} (and similarly the probability of
falsely accepting the null is called the \emph{type II error}). If the type I error can be bounded from above by a constant $\alpha$, then we say that the test is of level $\alpha$. In \cite[Sec. 8]{chevyrev2018signature} it is shown that rejecting the null if $\widehat{\mathcal{D}}_\mathcal{S}^1(X,Y)^2 > c_\alpha$ (for some $c_\alpha$ that depends on $m,n$ and $\alpha$) gives a test of level $\alpha$, where $\widehat{\mathcal{D}}_\mathcal{S}^1(X,Y)$ denotes the classical unbiased estimator of the $1^{\text{st}}$ order MMD \cite{gretton2012kernel}. This choice of threshold is conservative and can be improved by using data-dependent bounds such as in permutation tests (we refer to the MMD testing literature for extra details \cite{gretton2012kernel, sriperumbudur2010hilbert, sejdinovic2013equivalence}).

However, as discussed in Sec. \ref{sec:second_order} comparing the laws $\mathbb{P}_X, \mathbb{P}_Y$ via the estimator above might be insufficient to capture filtration-related information about $X,Y$. To overcome this limitation we propose instead to test the null-hypothesis
\begin{equation}\label{eqn:hypo_}
    H_0 : \mathbb{P}_{X \mid \mathcal{F}_X} = \mathbb{P}_{Y \mid \mathcal{F}_Y} \ \text{ against the alternative } \ H_A : \mathbb{P}_{X \mid \mathcal{F}_X} \neq \mathbb{P}_{Y \mid \mathcal{F}_Y}
\end{equation}
Using \Cref{thm:mmd_1_test}, one can immediately construct a filtration-sensitive kernel two-sample test for (\ref{eqn:hypo_}) provided one can build an empirical estimator of the $2^{\text{nd}}$ order MMD $\mathcal{D}^2_{\mathcal{S}}(X,Y)$. In the rest of this section we explain how to obtain such an estimator and ultimately show its consistency. 

Assuming availability of $m$ sample paths $\{\widetilde{x}^i\}_{i=1}^{m}$ from the stochastic process $\mu^{1}_{X \mid \mathcal{F}_X}$ and $n$ sample paths $\{\widetilde{y}^i\}_{i=1}^{n}$ from $\mu^{1}_{Y \mid \mathcal{F}_Y}$, an estimator of the squared $2^{\text{nd}}$ order MMD is given by
\begin{equation*}
    \widehat{\mathcal{D}}^2_{\mathcal{S}}(X,Y)^2 = \frac{1}{m(m-1)} \sum_{\substack{i,j=1 \\ i\neq j}}^m k_{\mathcal{S}}(\widetilde{x}^i,\widetilde{x}^j) - \frac{2}{mn} \sum_{i,j=1}^{m,n} k_{\mathcal{S}}(\widetilde{x}^i,\widetilde{y}^j) + \frac{1}{n(n-1)} \sum_{\substack{i,j=1 \\ i\neq j}}^n k_{\mathcal{S}}(\widetilde{y}^i,\widetilde{y}^j)
\end{equation*}
Computing this estimator boils down to evaluating the  signature kernel $k_{\mathcal{S}}(\widetilde{x},\widetilde{y})$ on sample paths $\widetilde{x}\sim\mu^{1}_{X \mid \mathcal{F}_X}$ and $\widetilde{y}\sim\mu^{1}_{Y \mid \mathcal{F}_Y}$. By \Cref{thm:sig_PDE}, the signature kernel solves the following PDE
\begin{equation*}
    \frac{\partial^2 u_{\widetilde{x},\widetilde{y}}}{\partial s \partial t}  = \left(\big\langle \widetilde{x}_{s-\delta},\widetilde{y}_{t-\delta}\big\rangle_{\mathcal{H}_\mathcal{S}(V)} - \big\langle \widetilde{x}_{s-\delta},\widetilde{y}_{t}\big\rangle_{\mathcal{H}_\mathcal{S}(V)} - \big\langle \widetilde{x}_{s},\widetilde{y}_{t-\delta}\big\rangle_{\mathcal{H}_\mathcal{S}(V)} + \big\langle \widetilde{x}_{s},\widetilde{y}_{t}\big\rangle_{\mathcal{H}_\mathcal{S}(V)} \right) u_{\widetilde{x},\widetilde{y}}
\end{equation*}
where the two derivatives in \cref{eq:sig_PDE} have been approximated by finite difference with time increment $\delta$. It remains to explain how to estimate, for any $s,t \in [0,T]$, the inner product $\big\langle \widetilde{x}_{s},\widetilde{y}_{t}\big\rangle_{\mathcal{H}_\mathcal{S}(V)}$ from sample paths of $X$ and $Y$. This can be achieved using the formalism of \emph{cross-covariance operators} \cite{muandet2016kernel} as thoroughly explained in \Cref{sec:crosscovar}, which yields to the following approximation
\begin{equation}\label{eq:innerprodapprox}
    \big\langle \widetilde{x}_{s},\widetilde{y}_{t}\big\rangle_{\mathcal{H}_\mathcal{S}(V)} \approx {\mathbf{k}_s^x}^\top (\mathbf{K}_{s,s}^{x,x}+ m\lambda I_{m})^{-1}\mathbf{K}_{T,T}^{x,y}(\mathbf{K}_{t,t}^{y,y}+n\lambda I_{n})^{-1}\mathbf{k}_t^y
\end{equation}
where $\mathbf{k}_s^x \in \mathbb{R}^m, \mathbf{k}_t^y \in \mathbb{R}^n$ are the vectors\footnote{Here we use the notation $x_{[0,s]}$ to denote the restriction of the path $x$ to the sub-interval $[0,s] \subset [0,T]$.}
\begin{equation*}
     [\mathbf{k}_s^x]_i=k_{\mathcal{S}}(x^i_{[0,s]} ,x_{[0,s]}), \quad [\mathbf{k}_t^y]_i=k_{\mathcal{S}}(y^i_{[0,t]},y_{[0,t]})
\end{equation*}
and $\mathbf{K}_{s,s}^{x,x}\in\mathbb{R}^{m\times m}$, $\mathbf{K}_{T,T}^{x,y}\in\mathbb{R}^{m\times n}$, $\mathbf{K}_{t,t}^{y,y}\in\mathbb{R}^{n\times n}$ are the matrices 
\begin{equation*}
    [\mathbf{K}_{s,s}^{x,x}]_{i,j}=k_{\mathcal{S}}(x^i_{[0,s]},x^j_{[0,s]}), \quad [\mathbf{K}_{T,T}^{x,y}]_{i,j}=k_{\mathcal{S}}(x^i_{[0,T]},y^j_{[0,T]}), \quad [\mathbf{K}_{t,t}^{y,y}]_{i,j}=k_{\mathcal{S}}(y^i_{[0,t]},y^j_{[0,t]})
\end{equation*}
and where $I_m$ (resp. $I_n$) is the $m \times m$ (resp. $n \times n$) identity matrix. The corresponding algorithm and its complexity analysis are provided in \Cref{sec:algo}.  

The next theorem ensures that the estimator $\widehat{\mathcal{D}}^2_{\mathcal{S}}(X,Y)$ is consistent for the $2^{\text{nd}}$ order MMD.
\begin{theorem}
$\widehat{\mathcal{D}}^2_{\mathcal{S}}(X,Y)$ is a consistent estimator for the $2^{\text{nd}}$ order MMD, i.e.
\begin{equation}
    |\widehat{\mathcal{D}}^2_{\mathcal{S}}(X,Y) - \mathcal{D}^2_{\mathcal{S}}(X,Y)| \overset{p}{\to} 0 \quad \text{ as } \ m,n \to\infty
\end{equation}
with $\{x^i\}_{i=1}^m \sim X$, $\{y^i\}_{i=1}^n \sim Y$ and where convergence is in probability.
\end{theorem}

We now iterate the procedure presented so far to define higher order KMEs and MMDs.

\subsection{Higher order kernel mean embeddings and maximum mean discrepancies}\label{sec:higher_order}

One can iterate the procedure described in Sec. \ref{sec:second_order} and recursively define, for any $n \in \mathbb{N}_{>1}$, the \emph{$n^{\text{th}}$ order KME of $X$} as the following point in $\mathcal{H}^n_{\mathcal{S}}(V)$
\begin{equation}\label{eq: higher rank signature kernel mean embedding}
    \mu^n_X= \int_{x \in \mathcal{X}(\mathcal{H}_{\mathcal{S}}^{n-1}(V))} k_{\mathcal{S}}(\cdot,x)\mathbb{P}_{\mu^{n-1}_{X \mid \mathcal{F}_X}}(dx)
\end{equation}
where $\mu^{n-1}_{X \mid \mathcal{F}_X}$ is the $(n-1)^{\text{st}}$ predictive KME of $X$ and
\begin{equation}
    \mathcal{H}^n_{\mathcal{S}}(V) = \underbrace{\mathcal{H}_{\mathcal{S}}(\mathcal{H}_{\mathcal{S}}(\ldots\mathcal{H}_{\mathcal{S}}}_{n \text{ times}}(V)\ldots))
\end{equation}
The associated \emph{$n^{\text{th}}$ order MMD} between two processes $X,Y$ is then defined as the norm of the difference in $\mathcal{H}^n_{\mathcal{S}}(V)$ of the two $n^{\text{th}}$ order KMEs
\begin{equation}\label{eq: MMD n}
    \mathcal{D}_{\mathcal{S}}^n(X,Y) = \norm{\mu^n_X - \mu^n_Y}_{\mathcal{H}_{\mathcal{S}}^n(V)}
\end{equation}
The following result generalizes Thm. \ref{thm:mmd_1_test} in that it shows that the $n^{\text{th}}$ order MMD is a stronger (i.e. finer) discrepancy measure than all the $k^{\text{th}}$ order MMDs of lower order $1<k<n$.
\begin{theorem}
Given two stochastic processes $X,Y$
\begin{equation}
    \mathcal{D}^n_{\mathcal{S}}(X,Y) = 0 \implies \mathcal{D}^k_{\mathcal{S}}(X,Y) = 0 \quad \text{for any } \ 1<k<n
\end{equation}
but the converse is not generally true.
\end{theorem}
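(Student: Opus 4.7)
The plan is to induct on the gap $n-k$: since the relation propagates by transitivity, it suffices to prove the one-step claim $\mathcal{D}^n_\mathcal{S}(X,Y) = 0 \implies \mathcal{D}^{n-1}_\mathcal{S}(X,Y) = 0$ for every $n \geq 2$, and then iterate $n-k$ times. The case $n=2$ is already contained in \Cref{thm:mmd_1_test}, so the real content is the inductive step that passes one level down in the tower of predictive KMEs; the proof for general $n$ is structurally parallel to the $n=2$ argument of \Cref{thm:mmd_1_test}, with the role of $X$ itself played by the lower-order predictive KME.

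The reduction unfolds in three observations. First, by the recursive definition~(\ref{eq: higher rank signature kernel mean embedding}), $\mu^n_X$ is the ordinary KME (with characteristic kernel $k_\mathcal{S}$) of the random path $\mu^{n-1}_{X\mid\mathcal{F}_X}$, so $\mathcal{D}^n_\mathcal{S}(X,Y)=0$ forces $\mathbb{P}_{\mu^{n-1}_{X\mid\mathcal{F}_X}}=\mathbb{P}_{\mu^{n-1}_{Y\mid\mathcal{F}_Y}}$ as measures on $\mathcal{X}(\mathcal{H}^{n-1}_\mathcal{S}(V))$, exactly as in the $n=2$ argument. Second, equality of process laws forces equality of every marginal, and in particular of the marginal at the terminal time $t=T$. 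Third, at $t=T$ the filtration $\mathcal{F}_{X_T}$ contains all information about $X$, so $\mathbb{P}_{X\mid\mathcal{F}_{X_T}}=\delta_X$; applying this identity recursively shows that $\mu^{n-1}_{X\mid\mathcal{F}_{X_T}}=k_\mathcal{S}(\mu^{n-2}_{X\mid\mathcal{F}_X},\cdot)$ as an $\mathcal{H}^{n-1}_\mathcal{S}(V)$-valued random variable. Hence the terminal marginal of $\mathbb{P}_{\mu^{n-1}_{X\mid\mathcal{F}_X}}$ is the pushforward of $\mathbb{P}_{\mu^{n-2}_{X\mid\mathcal{F}_X}}$ along the injective map $x\mapsto k_\mathcal{S}(x,\cdot)$ (injectivity follows from the fact that $k_\mathcal{S}$ separates points, which is implied by the characteristic property). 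Combining these three steps yields $\mathbb{P}_{\mu^{n-2}_{X\mid\mathcal{F}_X}}=\mathbb{P}_{\mu^{n-2}_{Y\mid\mathcal{F}_Y}}$, and taking KMEs of both sides gives $\mu^{n-1}_X=\mu^{n-1}_Y$, i.e.\ $\mathcal{D}^{n-1}_\mathcal{S}(X,Y)=0$.

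For the converse, it is enough to exhibit, for each $k\geq 1$, a pair $X_k,Y_k$ with $\mathcal{D}^k_\mathcal{S}(X_k,Y_k)=0$ but $\mathcal{D}^{k+1}_\mathcal{S}(X_k,Y_k)>0$; the contrapositive of the forward direction then gives $\mathcal{D}^n_\mathcal{S}(X_k,Y_k)>0$ for every $n>k$. Such separating pairs are precisely the processes that distinguish consecutive adapted topologies in the Aldous--Hoover--Pflug hierarchy discussed in \Cref{sec:main_sec}, and can be imported here via the signature-based identification established in \cite{bonnier2020adapted}; the $k=1$ instance can also be obtained by reusing the example constructed in the proof of \Cref{thm:mmd_1_test}. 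The main obstacle is the recursive terminal identity $\mu^{n-1}_{X\mid\mathcal{F}_{X_T}}=k_\mathcal{S}(\mu^{n-2}_{X\mid\mathcal{F}_X},\cdot)$: making it rigorous requires first pinning down a precise recursive definition of the higher-order predictive KME $\mu^{n-1}_{X\mid\mathcal{F}_X}$ (only the $n=2$ case is written explicitly in~(\ref{eqn:order-1-pred})), verifying joint measurability and Bochner integrability of the iterated conditional expectations, and propagating the compact-path-space assumptions of \Cref{sec:cond_filtration} through all the nested RKHSs. Once this foundational bookkeeping is in place, the injectivity and tower-property steps are routine.
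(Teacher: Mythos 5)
Your proposal is correct and follows essentially the same route as the paper's own proof: the paper likewise reduces to the characterization $\mathcal{D}^n_{\mathcal{S}}(X,Y)=0\iff\mathbb{P}_{X^{(n-1)}}=\mathbb{P}_{Y^{(n-1)}}$ (stated in terms of measure-valued prediction processes $X^{(n)}_t=\mathbb{P}_{X^{(n-1)}\mid\mathcal{F}_{X_t}}$ rather than RKHS-valued predictive KMEs), observes that equality at rank $n$ forces equality at all lower ranks --- which is exactly your terminal-marginal/Dirac-injectivity descent, the paper simply calling it "immediate from the definitions" --- and cites Hoover's Example 3.2 for the strictness of the hierarchy. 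Your explicit unpacking of the one-step descent via the pushforward along $x\mapsto k_{\mathcal{S}}(x,\cdot)$ is a faithful (if slightly more verbose) rendering of the same idea, and your closing remark about the need to pin down the recursive definition of $\mu^{n-1}_{X\mid\mathcal{F}_X}$ is a fair observation about a gap the paper also leaves implicit.
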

Other than hypothesis testing, another important application relying on the ability of distinguishing random variables is \emph{distribution regression} (DR) \cite{szabo2016learning}. In the next section we make use of the $n^{\text{th}}$ order MMD in the setting of DR on path-valued random variables presented in \cite{lemercier2021distribution} and propose a family of kernels on stochastic processes whose RKHSs contains richer classes of functions than the RKHS associated to the universal kernel proposed in \cite{lemercier2021distribution}.

We note that since $V$ is a Polish space (i.e., a separable, complete metric space) and the signature maps is continuous, in view of \cite[Lemma 4.33]{Christmann2008SVM} one can easily check that all RKHSs appearing in the present paper are separable Hilbert spaces by an induction argument and therefore all regular conditional distributions are well–defined.

\subsection{Higher order distribution regression}\label{sec:dr}

DR on stochastic processes describes the supervised learning problem where the input is a collection of sample paths and the output is a vector of scalars \cite{lemercier2021distribution}. Denote by $\mathcal{P}(\mathcal{X}(V))$ the set of stochastic processes with sample paths on $\mathcal{X}(V)$. Following the setup in \cite{lemercier2021distribution}, the goal is to learn a function $F : \mathcal{P}(\mathcal{X}(V)) \to \mathbb{R}$ from a training set of input-output pairs $\{(X_i, y_i)\}$ with $X_i \in \mathcal{P}(\mathcal{X}(V))$ and $y_i \in \mathbb{R}$, by means of a classical two-step procedure \cite{law2018bayesian, muandet2012learning, smola2007hilbert}. 

Firstly, a stochastic process $X \in \mathcal{P}(\mathcal{X}(V))$ is embedded into its KME $\mu_X^1 \in \mathcal{H}_\mathcal{S}(V)$ via the signature kernel $k_\mathcal{S}$. Secondly, another function $G:\mathcal{H}_\mathcal{S}(V) \to \mathbb{R}$ is learnt by solving the minimization $\argmin_{G \in \mathcal{H}_{\text{RBF}}}\sum_i\mathcal{L}(g(\mu_{X_i}^1),y_i)$, where $\mathcal{L}$ is a loss function, and $\mathcal{H}_{\text{RBF}}$ is the RKHS associated to the classical Gaussian kernel $k_{\text{RBF}}:\mathcal{H}_\mathcal{S}(V) \times \mathcal{H}_\mathcal{S}(V) \to \mathbb{R}$. This procedure materialises into a kernel on stochastic processes whose RKHS is shown to be dense in the space of functions $F: \mathcal{P}(\mathcal{X}(V)) \to \mathbb{R}$ that are continuous with respect to the weak topology \cite[Thm. 3.3]{lemercier2021distribution}. 

However, a class of approximators that is universal with respect to some topology is not guaranteed to well approximate functions that are discontinous with respect to that topology (but potentially continuous with respect to a finer topology). For example, financial practitioners are often interested in calibrating financial models to market data or pricing financial instruments from observations of market dynamics. These tasks can be formulated as DR problems on stochastic processes (see experiments in \Cref{sec:dr_finance}), but the resulting learnable functions are discontinuous with respect to the $1^{\text{st}}$ order MMD whilst being continuous with respect to the $2^{\text{nd}}$ order MMD \cite{Backhoff2019adapted}. This motivates the need to extend the kernel-based DR technique proposed in \cite{lemercier2021distribution} to situations where the target functions are not weakly continuous, which is what \Cref{thm:higher_rank_dr} addresses. A function $f:\mathbb{R} \to \mathbb{R}$ is called \emph{globally analytic with non-negative coefficients} if admits everywhere a Taylor expansion where all the coefficients are strictly positive, i.e. for any $x \in \mathbb{R}$ we have $f(x) = \sum_{i=0}^\infty a_i x^i$ with $a_i > 0$.

\begin{theorem}\label{thm:higher_rank_dr} 
Let $f: \mathbb{R} \to \mathbb{R}$ be a globally analytic function with non-negative coefficients. Define the family of kernels $K^n_{\mathcal{S}}:\mathcal{P}(\mathcal{X}(V))\times\mathcal{P}(\mathcal{X}(\mathbb{R}^d))\to\mathbb{R}$ as follows
\begin{equation}\label{eqn:univ_kernel}
K^n_{\mathcal{S}}(X,Y) = f(\mathcal{D}_\mathcal{S}^n(X,Y)), \quad n \in \mathbb{N}_{\geq 1}
\end{equation}
Then the RKHS associated to $K^n_{\mathcal{S}}$ is dense in the space of functions from $\mathcal{P}(\mathcal{X}(\mathbb{R}^d))$ to $\mathbb{R}$ which are continuous with respect to the $k^{\text{th}}$ order MMD for any $1<k\leq n$.
\end{theorem}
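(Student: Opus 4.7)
The plan is a Stone--Weierstrass argument on $\mathcal{P}(\mathcal{X}(\mathbb{R}^d))$ lifted through the $n^{\text{th}}$ order mean embedding $X \mapsto \mu_X^n$, reducing the claim to a universality statement for analytic kernels on a Hilbert space. First I would verify that $K^n_\mathcal{S}$ is positive-definite: writing $f(u) = \sum_{i \geq 0} a_i u^i$ with $a_i > 0$ and reading the kernel as the natural positive-definite object built from $\langle \mu^n_X, \mu^n_Y\rangle_{\mathcal{H}^n_\mathcal{S}(V)}$ via $f$, each power of the inner product is a kernel (a tensor power of $\langle \mu^n_\cdot, \mu^n_\cdot\rangle$), and non-negative combinations of kernels are kernels. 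Since $\mathcal{D}^n_\mathcal{S}(X, Y) = 0$ precisely when $\mu^n_X = \mu^n_Y$ by definition of the higher-order MMD, the feature map $X \mapsto \mu^n_X$ is injective on $\mathcal{D}^n_\mathcal{S}$-equivalence classes. Using the compactness argument at the end of Sec. 3.2 iteratively (the conditional embedding takes values in the closed convex hull of a compact set in $\mathcal{H}_\mathcal{S}(V)$), any compact subset $\mathcal{K} \subset \mathcal{P}(\mathcal{X}(\mathbb{R}^d))$ under the $\mathcal{D}^n_\mathcal{S}$-topology maps to a compact subset of $\mathcal{H}^n_\mathcal{S}(V)$.

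The core step is an appeal to the Steinwart--Christmann universality theorem for analytic kernels \cite{Christmann2008SVM}: on any compact subset of a Hilbert space, a kernel of the form $f(\langle \cdot, \cdot\rangle)$ with $f$ globally analytic and all Taylor coefficients strictly positive induces an RKHS that is uniformly dense in the continuous functions. Pulled back through the injective feature map $X \mapsto \mu^n_X$, this gives density of the RKHS of $K^n_\mathcal{S}$ in $C(\mathcal{K})$. For $1 < k \leq n$, Thm. 3 implies $\mathcal{D}^n_\mathcal{S}$ is finer than $\mathcal{D}^k_\mathcal{S}$, so every $\mathcal{D}^k_\mathcal{S}$-continuous function is already $\mathcal{D}^n_\mathcal{S}$-continuous and hence in the closure of the RKHS, giving the theorem.

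The hard part is the Stone--Weierstrass-style step hidden in the Steinwart--Christmann result: the RKHS is not literally an algebra, and density requires strict positivity of every Taylor coefficient of $f$ so that pointwise products $\langle \mu^n_\cdot, \mu^n_\cdot\rangle^i$ generate a separating unital sub-algebra of $C(\mathcal{K})$. A secondary technical point is ensuring that compactness in the $\mathcal{D}^n_\mathcal{S}$-topology on stochastic processes genuinely transfers to compactness of the image in $\mathcal{H}^n_\mathcal{S}(V)$, which is handled by the iterated continuity of the signature kernel feature map already used in Sec. 3.2.
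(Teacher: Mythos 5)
Your high-level strategy --- reduce to a black-box universality theorem on separable Hilbert spaces via the injective, continuous feature map $X \mapsto \mu^n_X$ --- is the same as the paper's, which invokes \cite[Thm.~2.2]{christmann2010universal} for exactly this pullback. However, there is a genuine mismatch in the kernel form. The kernel in question is $K^n_{\mathcal{S}}(X,Y) = f\bigl(\mathcal{D}^n_{\mathcal{S}}(X,Y)\bigr) = f\bigl(\norm{\mu^n_X - \mu^n_Y}_{\mathcal{H}^n_{\mathcal{S}}(V)}\bigr)$, a \emph{radial} kernel in the norm of the \emph{difference} of embeddings. You consistently read it as the dot-product (Taylor) kernel $f\bigl(\langle \mu^n_X, \mu^n_Y\rangle\bigr)$: your positive-definiteness argument (``each power of the inner product is a kernel''), your statement of the Christmann--Steinwart result (``a kernel of the form $f(\langle\cdot,\cdot\rangle)$''), and your Stone--Weierstrass discussion (``pointwise products $\langle\mu^n_\cdot,\mu^n_\cdot\rangle^i$ generate a separating unital sub-algebra'') are all about this different object. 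The two families do not coincide in general --- $\sum_i a_i\norm{h-h'}^i$ with $a_i>0$ need not even be positive-definite, whereas $\sum_i a_i\langle h,h'\rangle^i$ always is --- and the universality theorem as the paper quotes it in the appendix is stated for the radial form $f\bigl(\norm{\rho(z)-\rho(z')}_H\bigr)$, which is the one matching $K^n_{\mathcal{S}}$. So the positive-definiteness and algebra-density steps in your proposal, while internally consistent, concern the wrong kernel and do not by themselves establish universality of $K^n_\mathcal{S}$.

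A secondary gap concerns compactness. You work on an arbitrary compact subset $\mathcal{K}\subset\mathcal{P}(\mathcal{X}(\mathbb{R}^d))$ and appeal to the iterated compactness of conditional embeddings from Sec.~3.2. But the theorem asserts density in the space of $\mathcal{D}^k_{\mathcal{S}}$-continuous functions on all of $\mathcal{P}(\mathcal{X}(\mathbb{R}^d))$, and the universality theorem you want to apply needs the domain of $\rho$ itself to be a compact metric space. The paper supplies this by observing that the set of probability measures on the compact path space $\mathcal{X}(V)$ is compact in the relevant MMD topologies (citing a Prokhorov-type result); your argument never establishes compactness of $\mathcal{P}(\mathcal{X}(\mathbb{R}^d))$ and so, as written, would only yield density on compacta rather than the claimed global universality.
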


In \Cref{sec:applications} we will take $f(x) = \exp(-x^2/\sigma)$ with $\sigma>0$. This result marks the end of our analysis. Next we apply our theoretical results in the contexts of two-sample testing, DR and causal inference. 

\section{Applications}\label{sec:applications}
Here we demonstrate the practical advantage of using $2^{\text{nd}}$ order kernel mean embeddings, and evaluate the conditional kernel mean embedding for stochastic processes on a causal discovery task. Additional experimental details can
be found in \Cref{sec:experimentaldetails} and the code is available at \url{https://github.com/maudl3116/higherOrderKME}. 

\subsection{Hypothesis testing on filtrations}\label{sec:hypo}

We start by considering two processes $X^n$ and $X$ with transition probabilities depicted in \Cref{fig:example}. Although the laws $\mathbb{P}_n$ and $\mathbb{P}$  get arbitrarily close for large
$n$, their filtrations are very different. Indeed, the two processes have different information structures available
before time $t=1$. Indeed, for any $0<t\leq 1$, the trajectory of $X^n$ is deterministic, whilst the progression of $X$ remains random until $t=1$. 
\begin{figure}[h]
\begin{center}
	\begin{tikzpicture}[shorten >=1pt,draw=black!50,scale=0.62]
	\draw[black, fill=black, thick] (0,0) circle (2pt);
	\draw[black, fill=black, thick] (3,.2) circle (2pt);
	\draw[black, fill=black, thick] (3,-.2) circle (2pt);
	\draw[black, thick] (0,0) -- (3,.2);
	\node () at (1.5,.4) {\footnotesize$ p=0.5 $};
	\draw[black, thick] (0,0) -- (3,-.2);
	\node () at (1.5,-.5) {\footnotesize$ p=0.5 $};
	\draw [black, thick, decorate,decoration={brace,amplitude=4pt}] (3.15,0.2) -- (3.15,-.2) node [black,midway,xshift=9pt] {\footnotesize$\frac{2}{n}$};
	\draw[black, thick] (3,.2) -- (6,1);
	\node () at (4.5,.9) {\footnotesize$ p=1 $};
	\draw[black, thick] (3,-.2) -- (6,-1);
	\node () at (4.5,-1) {\footnotesize$ p=1 $};
	\draw[black, fill=black, thick] (6,1) circle (2pt);
	\draw[black, fill=black, thick] (6,-1) circle (2pt);
	
	\draw[black, fill=black, thick] (7,0) circle (2pt);
	\draw[black, fill=black, thick] (10,0) circle (2pt);
	\draw[black, thick] (7,0) -- (10,0);
	\node () at (8.5,.25) {\footnotesize$ p=1 $};
	\draw[black, thick] (10,0) -- (13,1);
	\node () at (11.5,.90) {\footnotesize$ p=0.5 $};
	\draw[black, thick] (10,0) -- (13,-1);
	\node () at (11.5,-1.) {\footnotesize$ p=0.5 $};
	\draw[black, fill=black, thick] (13,1) circle (2pt);
	\draw[black, fill=black, thick] (13,-1) circle (2pt);
	\end{tikzpicture}
\end{center}
	\caption{\small The supports of $\mathbb{P}_n$ (left) and $\mathbb{P}$ (right).}
	\label{fig:example}
\end{figure}
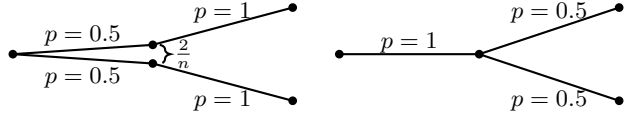
Being able to distinguish two such stochastic processes is crucial in quantitative finance: if $\mathbb{P}_n$ and $\mathbb{P}$ are the laws of two traded assets, $\mathbb{P}_n$ gives an arbitrage opportunity.
As shown in \Cref{fig:two_sample_test}, the $2^{\text{nd}}$ order MMD can distinguish these two processes with similar laws ($n=5\cdot10^5$) but different filtrations, while the $1^{\text{st}}$ order MMD fails to do so.
\begin{figure}[h]
    \centering
    \includegraphics[width=0.75\textwidth]{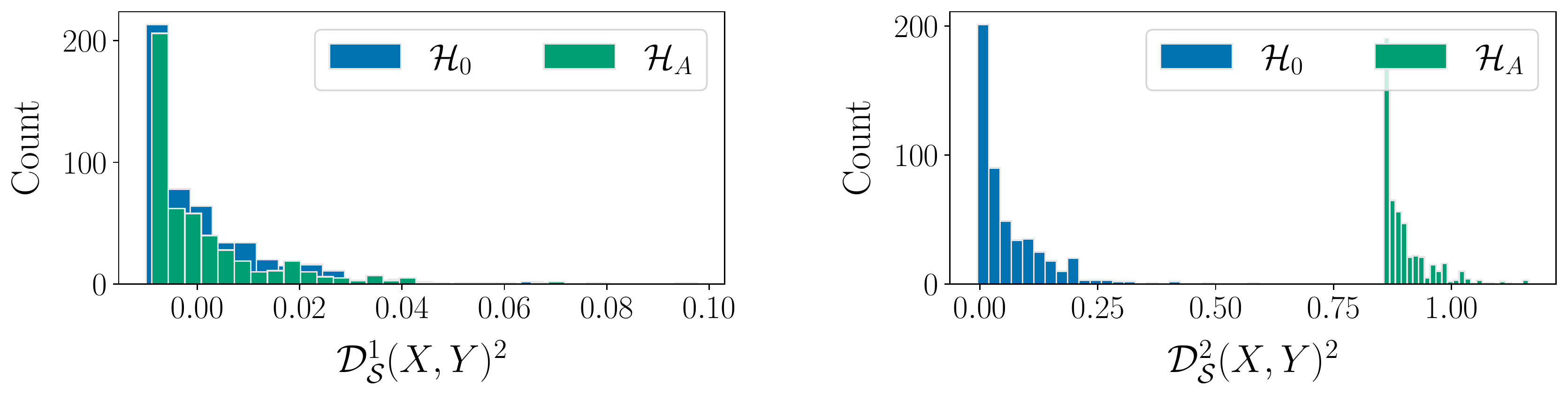}
    \caption{\small \textit{Left:} Empirical distribution of the $1^{\text{st}}$ order MMD. Under $\mathcal{H}_0$ the two measures are both equal to $\mathbb{P}$ and we use $500$ samples from each. Under $\mathcal{H}_A$ with $\mathbb{P}$ and $\mathbb{P}_n$ where $n=5\cdot10^5$, and we use $500$ samples. \textit{Right:} Same for the $2^{\text{nd}}$ order MMD. Histograms are obtained by computing $500$ independent instances of the MMD.}
    \label{fig:two_sample_test}
\end{figure}

\subsection{Applications of higher order distribution regression to quantitative finance}\label{sec:dr_finance}

In this section we use kernel Ridge regression and support vector machine (SVM) classification equipped with the kernel $K^2_\mathcal{S}$ from \Cref{thm:higher_rank_dr} to address two real-world problems arising in quantitative finance, notably the calibration of the \emph{rough Bergomi model} \cite{bayer2016pricing} and the pricing of \emph{American options} \cite{buehler2019deep}. We benchmark our filtration-sensitive kernel $K^2_\mathcal{S}$ against a range of kernels, including $K^1_\mathcal{S}$.

% thus demonstrate the prowess and versatility of our method on financial applications that require a more subtle and precise measurement of the distances between distributions in the data by demonstrating that it can be used for pricing and calibration of path dependent financial derivatives.

% While in the previous example we used the rank 2 MMD for a binary classification task, in a way that the test statistic was originally designed, the next application is a true regression problem: 

The rough Bergomi model is a rough volatility model \cite{gatheral2018volatility} satisfying the following stochastic dynamics 
\begin{equation}\label{eq:RoughBergomi}dS_t = \sqrt{V_t}S_tdW_t, \quad V_t = \int_0^t K(s,t)dZ_s, \quad Z_t = \rho W_t + \sqrt{1-\rho^2} W_t'
\end{equation}
where $W,W'$ are two independent Brownian motions and $K(s,t) = (t-s)^{h-0.5}$ where here we take $h=0.2$. The model in \cref{eq:RoughBergomi} is non-Markovian in the sense that the conditional law of $S \mid \mathcal{F}_{S_t}$ depends pathwise on the past history of the process. Of particular importance is the correct retrieval of the sign of the correlation parameter $\rho$ \cite{gassiat2019martingale}. We consider $50$ parameter values $\{\rho_i\}_{i=1}^{50}$ chosen uniformly at random from $[-1, 1]$. Each $\rho_i$ is regressed on a collection of $m=200$ sample trajectories. We use an SVM classifier endowed with different kernels (\Cref{table:finance}).
\begin{wraptable}{r}{0.6\textwidth}
% \begin{table}[h]
\vspace{1\baselineskip}
\caption{\small Quantitative finance examples. Average performances with standard
errors in parenthesis.}
% \begin{center}
% \resizebox{1\columnwidth}{!}{%
\begin{tabular}{l c c} 
 \toprule
   Kernel &   \thead{Rough Bergomi model\\calibration (Acc.)} & \thead{American option pricing\\ (MSE $\times 10^{-3}$)} \\ \midrule
    RBF & 87\% (5\%) & 1.07 (0.75)  \\ 
    Mat\'{e}rn &  87\% (3\%) & 2.75 (3.05)  \\
   $K^1_{\mathcal{S}}$  & 91\% (3\%) & 0.90 (0.34)  \\
    % \cdashline{1-3}\noalign{\vskip 0.5ex}
   $K^2_{\mathcal{S}}$ & \textbf{93\%} (3\%) & \textbf{0.52} (0.07) \\ \bottomrule
\end{tabular}%
\vspace{-1\baselineskip}
\label{table:finance}
% \end{center}
% \end{table}
\end{wraptable} One of the most studied optimal stopping problems is the pricing of an American option with a non-negative payoff function $g : \mathbb{R}^d \to \mathbb{R}$. Stock prices are assumed to follow a $d$-dimensional stochastic process $X$. The price of the corresponding option is the solution of the optimal stopping problem $\sup_{\tau} \mathbb{E}[g(X_\tau) \mid X_0]$, where the supremum is taken over stopping times $\tau$. Despite significant advances, pricing American options remains one of the most computationally challenging problems in financial optimization, in particular when the underlying process $X$ is non-Markovian. This is the setting we consider, modelling stock prices as sample paths from fractional Brownian motion (fBm) \cite{duncan2000stochastic} with different Hurst exponents $h\in(0,1)$. True target prices are obtained via expensive Monte Carlo simulations \cite{longstaff2001valuing}. We consider $25$ values of $\{h_i\}_{i=1}^{25}$ sampled uniformly at random in $[0.2,0.8]$ and use $500$ samples from each fBm. As shown in \Cref{table:finance} our kernel $K^2_{\mathcal{S}}$ yields the best results on both tasks (rough Bergomi model calibration and American option pricing), systematically outperforming other classical kernels as well as the kernel $K^1_{\mathcal{S}}$ introduced in \cite{lemercier2021distribution}.

\subsection{Inferring causal graph for interacting bodies}\label{sec:causality}
% \begin{wrapfigure}{l}{0.33\textwidth}
% \vspace{-2\baselineskip}
%   \begin{center}
%     \includegraphics[width=0.3\textwidth,trim={2.1cm 2.1cm 1.cm 1.5cm},clip]{}
%   \end{center}
%   \caption{Three interacting balls describing 2d trajectories}
%   \vspace{-10pt}
%   \label{fig:balls}
% \end{wrapfigure}
Finally, we consider the task of recovering the causal relationships between interacting bodies solely from observations of their multidimensional trajectories. We employ the multi-body interaction simulator from \cite{li2020causal} in order to simulate an environment where $N$ balls are connected by invisible physical relations (e.g. a spring) and describe 2D trajectories (see Fig. 4a with $N=3$ and $2$ springs). At the beginning of a simulated episode, the initial positions of the balls are generated at random, and during the episode, the balls are subject to forces with random intensity and direction. By simulating $m$ episodes we end up with $m$ sample trajectories for each of the $N$ balls. We use the kPC algorithm \cite{sun2007kernel}---which relies on conditional independence testing--- with the signature kernel and evaluate its ability to recover whether any two balls are connected or not. We vary $m$ and $N$ and report the results in Figs. 4b and 4c. Each experiment is run $15$ times, $30\%$ of the runs are used to chose the hyperparameters, and the reported results have been obtained on the remaining runs. We note that for finite datasets conditional independence testing is hard without additional assumptions, as discussed in \cite{shah2020hardness, lundborg2021conditional}.

\begin{figure}[h]
     \centering
     \begin{subfigure}[b]{0.3\textwidth}
         \centering
          \includegraphics[scale=0.45,trim={2.5cm 1.8cm .5cm 2.cm},clip]{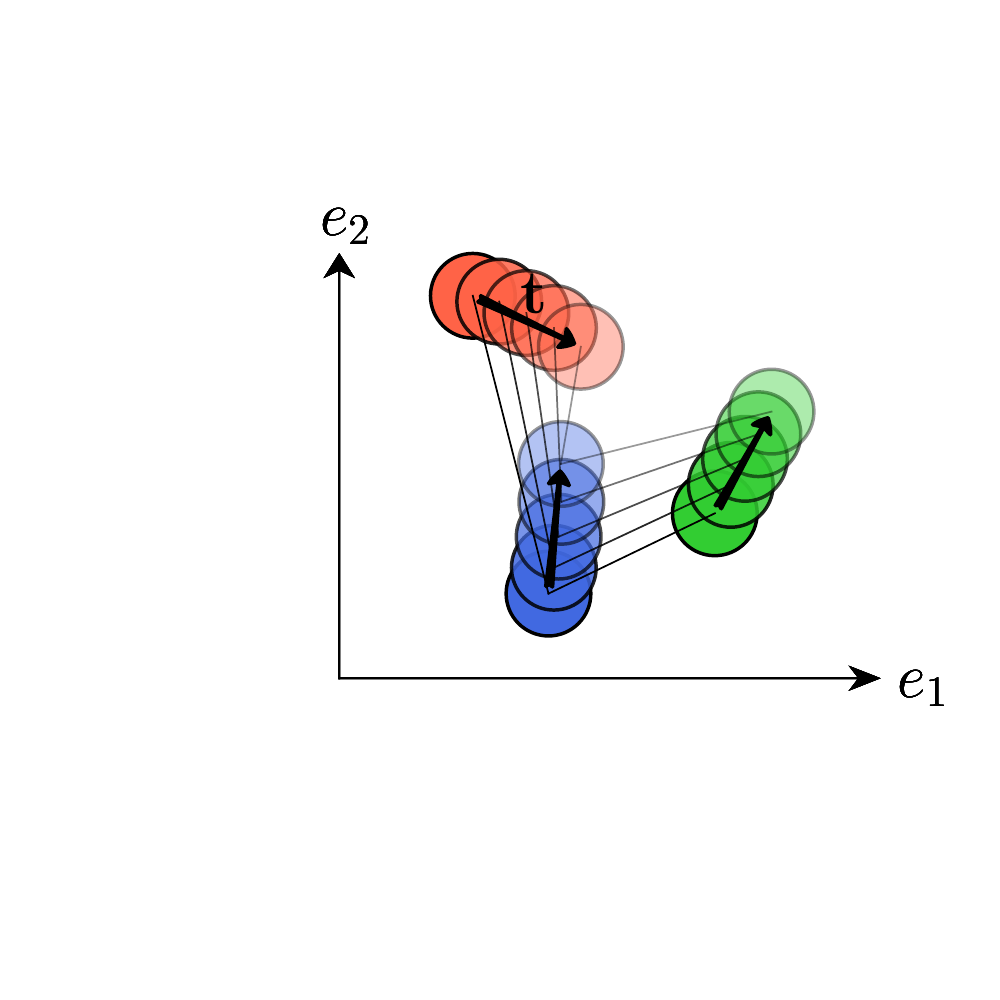}
         \caption{\small $3$ interacting balls describing trajectories in the 2D plane over time.}
         \label{fig:balls1}
     \end{subfigure}
     \hfill
     \begin{subfigure}[b]{0.3\textwidth}
         \centering
         \includegraphics[width=\textwidth]{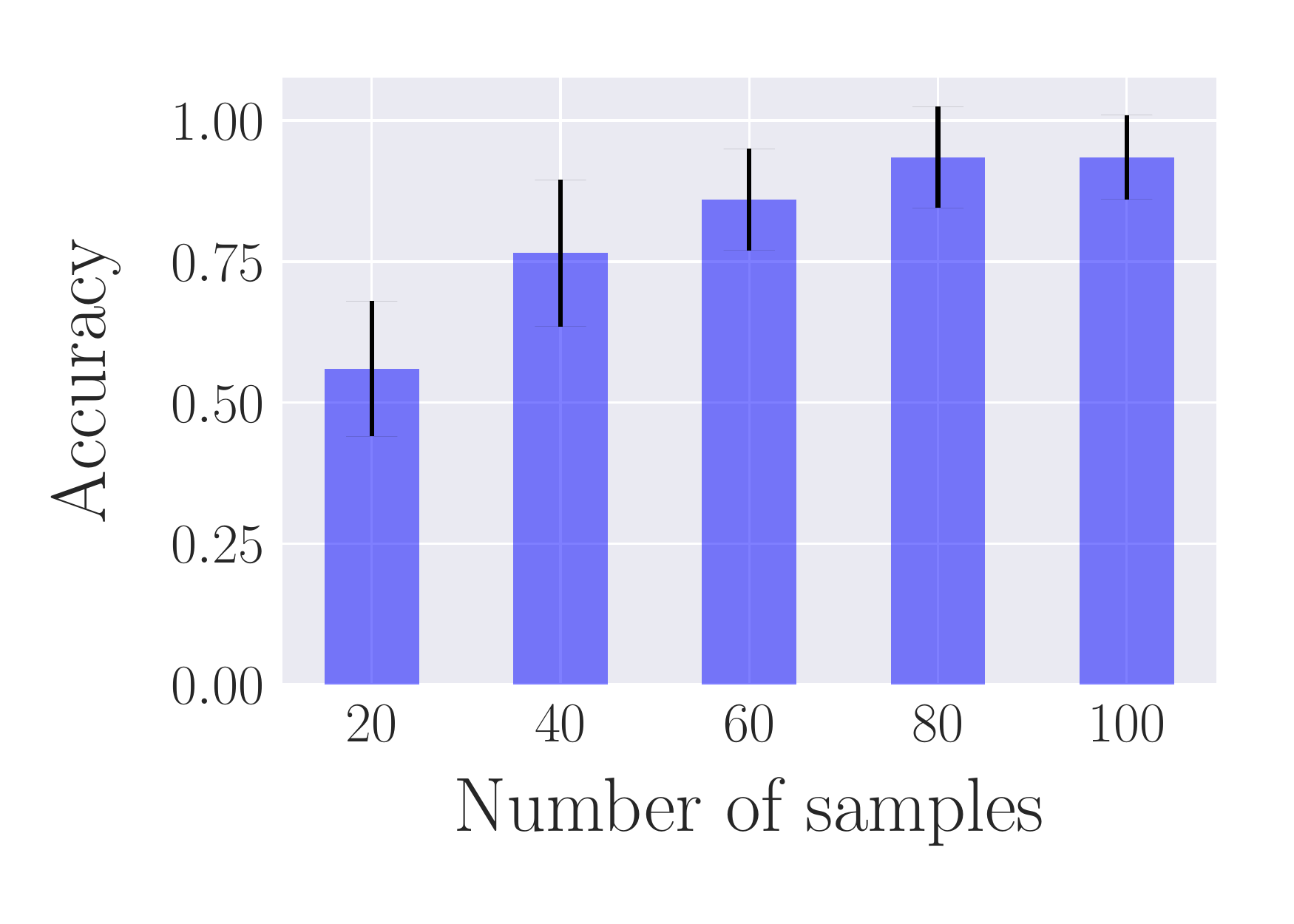}
         \caption{\small Accuracy on binary classification of edges with a varying number of sample episodes ($5$ balls)}
         \label{fig:sample}
     \end{subfigure}
     \hfill
    \begin{subfigure}[b]{0.3\textwidth}
         \centering
         \includegraphics[width=\textwidth]{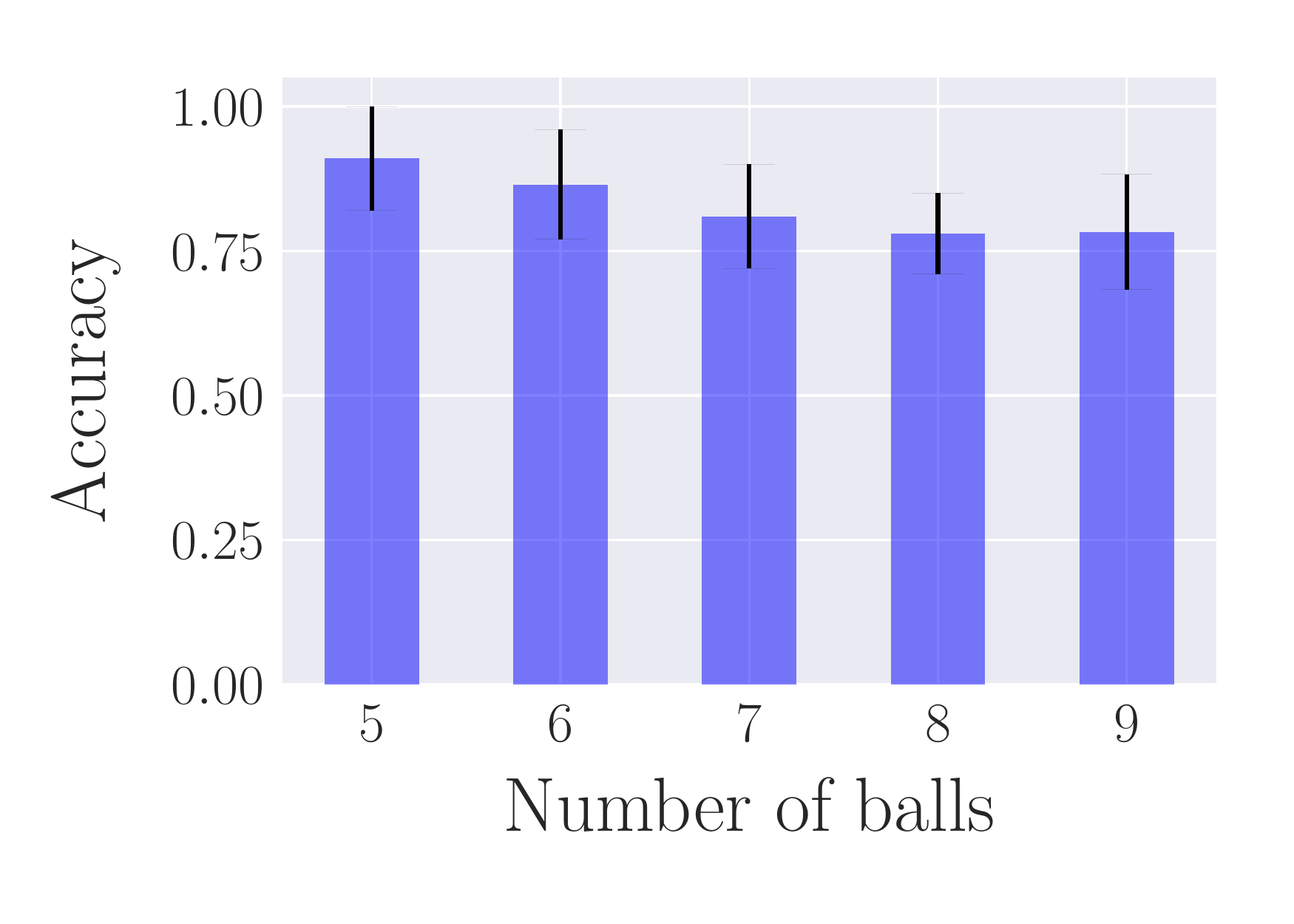}
        \caption{\small Accuracy on binary classification of edges with a varying number of balls ($100$ samples)}
         \label{fig:balls2}
     \end{subfigure}
        % \caption{}
        \label{fig:causal_discovery}
\end{figure}

\section{Conclusion}

In this paper, we introduced a family of higher order KMEs by conditioning a stochastic process on its filtration, generalizing the classical notion of KME. We derived an empirical estimator for the $2^{\text{nd}}$ order MMD and proved its consistency. We then proposed a filtration-sensitive kernel two-sample test and showed with simulations its ability to capture information that gets missed by the standard MMD test. In addition, we constructed a family of universal kernels on stochastic processes that allows to solve real-world calibration and optimal stopping problems in quantitative finance via Ridge regression. Finally, we designed a causal-discovery algorithm using conditional independence tests to recover the causal graph of structural dependencies among interacting bodies solely from observations of their multidimensional trajectories.

\section{Future work}

Regarding the choice of kernel hyperparameters, in the setting of two-sample tests, we can use various hyperparameter selection methods which have been proposed in the kernel literature, including approaches aiming at maximizing the test power using the signal-to-noise-ratio as an objective \cite{gretton2012optimal, sutherland2017, liu2020learning}. In the distribution regression setting, we made use of a cross validation approach. Developing hyperparameter tuning methodologies for higher order KMEs is an interesting future work direction and we will note that  \cite{flaxman2016bayesian, hsu2018hyperparameter} are certainly a good starting point for such an investigation.

Higher order KMEs have the potential to be used beyond two-sample tests and distribution regression. For example [6] recently investigated the use of the $1^{\text{st}}$ order MMD to derive an approximate Bayesian computation (ABC) algorithm for irregular time series. Another idea that is currently being investigated is using higher order MMDs as discriminators in autoregressive generative models for time series, where conditioning the future trajectories on past observations is key.

We conclude with a theoretical remark. All paths considered in the present paper are piecewise linear. Consequently, all sample paths from higher order predictive KMEs are also piecewise linear and their KMEs are well defined. Such a nice property will not hold anymore if one considered more generic continuous sample paths, because such regularity of sample paths from the corresponding higher order predictive KMEs might break as noted in \cite[Remark 1]{bonnier2020adapted}. The study of how the regularity changes by taking higher order kernel mean embeddings is an interesting direction for future work.

\begin{ack}
ML and CS were respectively supported by the EPSRC grants EP/L016710/1 and EP/R513295/1. TD acknowledges support from EPSRC (EP/T004134/1), UKRI Turing AI Fellowship (EP/V02678X/1), and the Lloyd’s Register Foundation programme on Data Centric Engineering through the London Air Quality project. ML, CS and TL were supported by the Alan Turing Institute under the EPSRC grant EP/N510129/1 and DataSig under the grant EP/S026347/1.
\end{ack}

\bibliographystyle{unsrt}
\bibliography{references}

\appendix

\newpage

\section*{\centering \LARGE Appendix}

This Appendix is organised as follows: A) using the formalism of cross-covariance operators we define an Hilbert-Schmidt conditional independence criterion for stochastic processes, and provide further details on the construction of the estimator for the $2^{\text{nd}}$ order MMD; B) we outline algorithms and their complexities for computing higher order MMDs; C) we provide further experimental details; D) we prove the theorems from the main paper.  

\section{Cross-covariance operators}\label{sec:crosscovar}

Covariance and cross-covariance
operators on RKHSs are important concepts for modern applications of conditional KMEs \cite{fukumizu2004dimensionality,song2009hilbert}. In this section we will use this formalism (adapted to the case of path-valued random variables) to firstly derive a criterion for conditional independence of stochastic processes and secondly provide more details on the derivation of our estimator of the $2^{\text{nd}}$ order MMD.

Let $X,Y \in \mathcal{P}(\mathcal{X}(V))$ be two stochastic processes and their joint process $(X,Y) \in \mathcal{P}(\mathcal{X}(V \oplus V))$. Define the \emph{cross-covariance operator} $\mathcal{C}_{Y,X}$ as the following point in the tensor product of RKHSs $\mathcal{H}_{\mathcal{S}}(V) \otimes \mathcal{H}_{\mathcal{S}}(V)$
\begin{equation}
    \mathcal{C}_{Y,X} = \mathbb{E}_{(X,Y)}[k_{\mathcal{S}}(\cdot,Y) \otimes k_{\mathcal{S}}(X,\cdot)]
\end{equation}
or equivalently as the \emph{Hilbert-Schmidt operator} $\mathcal{C}_{Y,X} : \mathcal{H}_{\mathcal{S}}(V) \to \mathcal{H}_{\mathcal{S}}(V)$ defined for any function $f \in \mathcal{H}_{\mathcal{S}}(V)$ as follows
\begin{equation}
    \mathcal{C}_{Y,X}(f)(\cdot) = \int_{(x,y) \in \mathcal{X}(V \oplus V)} k_{\mathcal{S}}(\cdot,y) f(x) \mathbb{P}_{(X,Y)}(d(x,y))
\end{equation}
The equivalence between tensor product of RKHSs and Hilbert-Schmidt operators is given by the isomorphism $\Phi: \mathcal{H}_{\mathcal{S}}(V) \otimes \mathcal{H}_{\mathcal{S}}(V) \to \text{HS}\left(\mathcal{H}_{\mathcal{S}}(V),\mathcal{H}_{\mathcal{S}}(V)\right)$ defined as follows
\begin{equation}
    \Phi_{\mathcal{S}}\left(\sum_{\mathbf{k},\mathbf{k}'} \alpha_{\mathbf{k},\mathbf{k}'} \mathcal{S}^{(\mathbf{k})} \otimes \mathcal{S}^{(\mathbf{k}')}\right) = \sum_{\mathbf{k},\mathbf{k}'} \alpha_{\mathbf{k},\mathbf{k}'} \left\langle \cdot, \mathcal{S}^{(\mathbf{k})} \right\rangle_{\mathcal{H}_{\mathcal{S}}(V)}\mathcal{S}^{(\mathbf{k}')}
\end{equation}
where $\mathcal{S}^{(\mathbf{k})}$ denotes the $\mathbf{k}^{\text{th}}$ element of an orthogonal basis of $\mathcal{H}_\mathcal{S}(V)$ and  $\text{HS}\left(\mathcal{H}_{\mathcal{S}}(V),\mathcal{H}_{\mathcal{S}}(V)\right)$ is the space of Hilbert-Schmidt operators from $\mathcal{H}_{\mathcal{S}}(V)$ to itself. An example of such basis is given by the \emph{signature basis} defined for any path $x \in \mathcal{X}(V)$ and any coordinate  $\mathbf{k}=(k_1,...,k_j)$ as  
\begin{equation}
    \mathcal{S}^{\mathbf{k}}(x) = \underset{0<s_1<\ldots<s_j<T}{\int \ldots \int} d x^{(k_1)}_{s_1} \ldots dx^{(k_j)}_{s_j}
\end{equation}
The centered version $\tilde{\mathcal{C}}_{Y,X} \in \mathcal{H}_{\mathcal{S}}(V) \otimes \mathcal{H}_{\mathcal{S}}(V)$ of the operator $\mathcal{C}_{Y,X}$ is defined as 
\begin{equation}
    \tilde{\mathcal{C}}_{Y,X} = \mathcal{C}_{Y,X} - \mu^1_X \otimes \mu^1_Y
\end{equation}
Similarly let $\mathcal{C}_{X,X} \in \mathcal{H}_{\mathcal{S}}(V) \otimes \mathcal{H}_{\mathcal{S}}(V)$ be the following covariance operator
\begin{equation}
    \mathcal{C}_{X,X} = \mathbb{E}_X[k_{\mathcal{S}}(X,\cdot)\otimes k_{\mathcal{S}}(X,\cdot)]
\end{equation}
or equivalently $\mathcal{C}_{X,X} \in \text{HS}\left(\mathcal{H}_{\mathcal{S}}(V),\mathcal{H}_{\mathcal{S}}(V)\right)$
\begin{equation}
    \mathcal{C}_{X,X}(f)(\cdot) = \int_{x \in \mathcal{X}(V)} k_{\mathcal{S}}(\cdot,x) f(x) \mathbb{P}_X(dx)
\end{equation}

Under the assumption that for any $f\in\mathcal{H}_{\mathcal{S}}(V)$ the function $x\mapsto \mu^1_{f(Y) \mid X=x}$ from $\mathcal{X}(V)$ to $\mathbb{R}$ is in $\mathcal{H}_{\mathcal{S}}(V)$, the authors in \cite{song2009hilbert,fukumizu2004dimensionality} showed that 
\begin{equation}\label{eq:ckme_op}
     \mu^1_{Y \mid X} = \mathcal{C}_{Y,X}\mathcal{C}_{X,X}^{-1}
\end{equation}
However, this assumption might not hold in general \cite{song2009hilbert,fukumizu2004dimensionality}. This technical issue can be circumvented by resorting to a regularized version of \cref{eq:ckme_op}: this yields to

\begin{equation}\label{eq:ckme_op_reg}
    \mu^1_{Y \mid X} \approx \mathcal{C}_{Y,X}(\mathcal{C}_{X,X}+\lambda I_{\mathcal{H}_{\mathcal{S}}(V)})^{-1}, \quad \lambda > 0
\end{equation}

where $I_{\mathcal{H}_{\mathcal{S}}(V)}$ is the identity map from $\mathcal{H}_{\mathcal{S}}(V)$ to itself. Under some mild conditions the empirical estimator of \cref{eq:ckme_op_reg} is equal to the empirical estimator of \cref{eq:ckme_op} \cite[Thm. 8]{fukumizu2013kernel}. In particular, one has 

\begin{equation}
    \hat{\mu}^1_{Y|X=x} =  {\mathbf{k}^{x}}^\top(\mathbf{K}^{x,x}+m\lambda I_m)^{-1}\mathbf{k}^{y}(\cdot)
\end{equation}

based on sample paths $\{(x^i,y^i)\}_{i=1}^{m}$ from the joint $(X,Y)$ where $\mathbf{k}^{x}$, $\mathbf{K}^{x,x}$ and $\mathbf{k}^{y}(\cdot)$ are such that

\begin{equation*}
    [\mathbf{k}^x]_i=k_{\mathcal{S}}(x^i,x) \quad [\mathbf{K}^{x,x}]_{i,j}=k_{\mathcal{S}}(x^i,x^j) \quad \mathbf{k}^{y}(\cdot)={[k_{\mathcal{S}}(y_1,\cdot),\ldots,k_{\mathcal{S}}(y_m,\cdot)]}^\top
\end{equation*}

Cross-covariance operators have been used to define kernel-based measures of conditional dependence, as we shall discuss in the next section.  
\subsection{Hilbert-Schmidt Conditional Independence Criterion for stochastic processes}\label{ssec:HSICstopro}
Multiple measures of conditional dependence have been proposed in the literature \cite{fukumizu2007kernel,sun2007kernel,tillman2009nonlinear}. In this section, we follow  \cite{tillman2009nonlinear} to define a nonparametric conditional dependence measure for stochastic processes, based on the \textit{conditional cross-covariance operator} $\tilde{\mathcal{C}}_{Y,X|Z}:\mathcal{H}_{\mathcal{S}}(V)\to\mathcal{H}_{\mathcal{S}}(V)$,
\begin{equation}
    \tilde{\mathcal{C}}_{Y,X|Z}= \tilde{\mathcal{C}}_{Y,X} - \tilde{\mathcal{C}}_{Y,Z}\tilde{\mathcal{C}}_{Z,Z}^{-1}\tilde{\mathcal{C}}_{Z,X}
\end{equation}
The squared Hilbert-Schmidt norm $H_{YX|Z}:=||\tilde{\mathcal{C}}_{(Y,Z),X|Z}||_{\text{HS}}^2$ can be used as measure of conditional dependence of stochastic processes. Since the signature kernel $k_{\mathcal{S}}$ is characteristic, it follows that $X \indep Y \mid Z \iff H_{YX|Z}=0$ \cite{tillman2009nonlinear}. 

Given $m$ sample paths $\{(x_i,y_i,z_i)\}_{i=1}^{m}$ from the joint distribution of $(X,Y,Z)$, let $K^x,K^y$ and $K^z$ be the Gram matrices with entries, 
\begin{equation*}
[K^{x}]_{i,j}=k_{\mathcal{S}}(x_i,x_j) \quad [K^{y}]_{i,j}=k_{\mathcal{S}}(y_i,y_j) \quad [K^{z}]_{i,j}=k_{\mathcal{S}}(z_i,z_j)
\end{equation*}
An empirical estimator of the kernel conditional dependence measure $H_{YX|Z}$ is then given by, 
\begin{equation*}
 \resizebox{\textwidth}{!}{%
    $\widehat{H}_{YX|Z}=\frac{1}{m^2}\left\{\mathrm{tr}(\tilde{K}^{x}\tilde{K}^{y}) -2\mathrm{tr}(\tilde{K}^{x}\tilde{K}^{z}(\tilde{K}^{z}_{\epsilon})^{-2}\tilde{K}^{z}\tilde{K}^{y}) + \mathrm{tr}(\tilde{K}^{x}\tilde{K}^{z}(\tilde{K}^{z}_{\epsilon})^{-2}\tilde{K}^{z}\tilde{K}^{y}\tilde{K}^{z}(\tilde{K}^{z}_{\epsilon})^{-2}\tilde{K}^{z}) \right\}$%
}
\end{equation*}
where $\tilde{K}^{z}_{\epsilon}=\tilde{K}^{z}+\epsilon I_m$ and $\tilde{K}^{x},\tilde{K}^{y}$,$\tilde{K}^{z}$ are the centered versions of the matrices $K^x,K^y$ and $K^z$,
\begin{equation*}
    \tilde{K}^{x}=HK^{x}H \quad  \tilde{K}^{y}=HK^{y}H \quad \tilde{K}^{z}=HK^{z}H \quad 
\end{equation*} 
with $H=I_m-{m}^{-1}\mathbf{1}_m$ and $\bold{1}_m$ the $m\times m$ matrix with all entries set to $1$. This estimator can be used as a test statistic for testing whether $X$ and $Y$ are independent given $Z$. However, it is not known how to analytically compute the null distribution of the test statistic, and permutation tests are typically used. In \Cref{sec:causality} we use this measure of conditional dependence as part of the kPC algorithm to infer causal relationships between multidimensional stochastic processes. We provide more details in \Cref{sec:experimentaldetails}.

\subsection{Construction of the estimator for the second order MMD $\mathcal{D}_\mathcal{S}^2$}

As discussed in the main paper, the estimation of the $2^{\text{nd}}$ order MMD, required the ability to compute inner products of the form $\langle\tilde{x}_s,\tilde{y}_t\rangle$ in $\mathcal{H}_{\mathcal{S}}(V)$. Here, we provide more details on the approximation that we have used in \cref{eq:innerprodapprox}, also restated below,
\begin{equation*}
    \big\langle \widetilde{x}_{s},\widetilde{y}_{t}\big\rangle_{\mathcal{H}_\mathcal{S}(V)} \approx {\mathbf{k}_s^x}^\top (\mathbf{K}_{s,s}^{x,x}+ m\lambda I_{m})^{-1}\mathbf{K}_{T,T}^{x,y}(\mathbf{K}_{t,t}^{y,y}+n\lambda I_{n})^{-1}\mathbf{k}_t^y
\end{equation*}
where $\widetilde{x}$ and $\widetilde{y}$ are sample paths from the processes $\mu^1_{X|\mathcal{F}_X}$ and $\mu^1_{Y|\mathcal{F}_Y}$ respectively. In particular,
\begin{equation*}
\widetilde{x}_s = \mu^1_{X|x_{[0,s]}} \quad\text{and}\quad \widetilde{y}_t = \mu^1_{Y|y_{[0,t]}}
\end{equation*}
As discussed at the beginning of this section, their empirical estimators are constructed from $m$ samples $\{x^i\}_{i=1}^{m}$ from $X$ and $n$ samples $\{y^j\}_{j=1}^{n}$ from $Y$ respectively 
\begin{equation*}\label{eqn:estimator_x}
    \widetilde{x}_s\approx  {\mathbf{k}^x_s}^\top(\mathbf{K}^{x,x}_{s,s}+m\lambda I_{m})^{-1}\mathbf{k}^{x}(\cdot) \quad\text{and}\quad \widetilde{y}_t\approx  {\mathbf{k}^y_t}^\top(\mathbf{K}^{y,y}_{t,t}+n\lambda I_{n})^{-1}\mathbf{k}^{y}(\cdot)
\end{equation*}
where $\mathbf{k}_s^{x}$, $\mathbf{K}^{x,x}_{s,s}$, $\mathbf{k}^{x}(\cdot)$ and $\mathbf{k}_t^{y}$, $\mathbf{K}^{y,y}_{t,t}$ and $\mathbf{k}^{y}(\cdot)$ are defined by,
\begin{align*}
    &[\mathbf{k}^{x}_s]_i=k_{\mathcal{S}}(x^i_{[0,s]},x^{}_{[0,s]}) \quad [\mathbf{K}^{x,x}_{s,s}]_{i,j}=k_{\mathcal{S}}(x^i_{[0,s]},x^j_{[0,s]}) \quad \mathbf{k}^{x}(\cdot)={[k_{\mathcal{S}}(x_1,\cdot),\ldots,k_{\mathcal{S}}(x_m,\cdot)]}^\top \\
   & [\mathbf{k}^{y}_t]_i=k_{\mathcal{S}}(y^i_{[0,t]},y^{}_{[0,t]}) \quad [\mathbf{K}^{y,y}_{t,t}]_{i,j}=k_{\mathcal{S}}(y^i_{[0,t]},y^j_{[0,t]}) \quad \mathbf{k}^{y}(\cdot)={[k_{\mathcal{S}}(y_1,\cdot),\ldots,k_{\mathcal{S}}(y_n,\cdot)]}^\top
\end{align*}
Alternatively, we can write $\widetilde{x}_s\approx\sum_{i=1}^{m}\alpha_i k_{\mathcal{S}}(x^i,\cdot)$ with $\boldsymbol{\alpha}=(\mathbf{K}^{x,x}_{s,s}+m\lambda I_{m})^{-1}\mathbf{k}^{x}_s$. Similarly we have $\widetilde{y}_t\approx\sum_{j=1}^{n}\beta_j k_{\mathcal{S}}(\cdot,y^j)$ with $\boldsymbol{\beta}=(\mathbf{K}^{y,y}_{t,t}+n\lambda I_{n})^{-1}\mathbf{k}^{y}_t$. Therefore, the inner product between $\widetilde{x}_s$ and $\widetilde{y}_t$ can be approximated as follows,
\begin{align*}
    \left\langle\widetilde{x}_s,\widetilde{y}_t\right\rangle_{\mathcal{H}_{\mathcal{S}}(V)} &\approx \sum_{i=1}^{m}\sum_{j=1}^{n}\alpha_i\beta_j k_{\mathcal{S}}(x^i,y^j)\\
    & = \boldsymbol{\alpha}^\top\mathbf{K}^{x,y}_{T,T}\boldsymbol{\beta} \\
    & ={\mathbf{k}^{x}_s}^\top(\mathbf{K}^{x,x}_{s,s}+m\lambda I_{m})^{-1}\mathbf{K}^{x,y}_{T,T}(\mathbf{K}^{y,y}_{t,t}+n\lambda I_{n})^{-1}\mathbf{k}^{y}_t
\end{align*}

where $\mathbf{K}^{x,y}_{T,T}\in\mathbb{R}^{m\times n}$ with $[\mathbf{K}^{x,y}_{T,T}]_{i,j}=k_{\mathcal{S}}(x^i,y^j)$. Next we outline the algorithm to compute $\hat{\mathcal{D}}^2_{\mathcal{S}}$. 

\section{Algorithms}\label{sec:algo}
In this section, we provide algorithms to compute the empirical estimator $\hat{\mathcal{D}}_{\mathcal{S}}^{k}$ for the $k^{\text{th}}$ order MMD, which rely on the ability to evaluate the signature kernel $k_{\mathcal{S}}(x,y)$ where $x$ and $y$ are two paths taking their values in the Hilbert space $\mathcal{H}^{k-1}(V)$. Following \cite[Sec. 3.1.]{cass2020computing} we use an explicit finite difference scheme to approximate the PDE solution $u_{x,y}$ on a grid $\mathcal{P}$ of size $P\times Q$, 
\begin{equation*}
    \mathcal{P}=\{0=s_1<s_2<\ldots<s_P=T\}\times\{0=t_1<t_2<\ldots<t_Q=T\}
\end{equation*}
Writing $u_{x,y}(s_i,t_j) = u_{i,j}$ to make the notation more concise, we use an update rule of the form,
\begin{equation*}
    u_{i+1,j+1} = f(u_{i,j+1},u_{i+1,j},u_{i,j},M_{i,j}), \quad M_{i,j}=\langle x_{s_{i+1}}-x_{s_i},y_{t_{j+1}}-y_{t_j}\rangle_{\mathcal{H}^{k-1}_{\mathcal{S}}(V)}
\end{equation*}
Hence, computing $k_{\mathcal{S}}(x,y)$ consists in forming the $(P-1)\times (Q-1)$ matrix $M$ such that, 
\begin{equation*}
    [M]_{i,j} = \langle x_{s_{i+1}}-x_{s_i},y_{t_{j+1}}-y_{t_j}\rangle_{\mathcal{H}^{k-1}_{\mathcal{S}}(V)}
\end{equation*}
and iteratively applying the update rule as outlined in \Cref{alg:pdesolve}. Besides, in \Cref{alg:pdesolve} we distinguish the case where the solution $u$ on the entire grid is returned, and the case where only the solution at the final times $(s_P,t_Q)=(T,T)$ is returned, which corresponds to the value of the kernel $k_{\mathcal{S}}(x,y)$. The runtime complexity to solve one PDE is $\mathcal{O}(PQ)$. We make use of parallelization strategy to drastically speed-up the PDE solver on CUDA-enabled GPUs. 

\subsection{Algorithm for the $1^{\text{st}}$ order MMD}
In this section we provide the algorithm to compute an empirical estimator of the $1^{\text{st}}$ order MMD. This way, we introduce subroutines (\Cref{alg:pdesolve} and \Cref{alg:firstordergram}) for the estimator of the $2^{\text{nd}}$ order MMD. We assume that $m=n$ and $P=Q$ to simplify the final runtime complexities of the algorithms. 

Let $\{x^i\}_{i=1}^{m}\sim X$ and  $\{y^j\}_{j=1}^{n}\sim Y$. An unbiased estimator of the $1^{\text{st}}$ order MMD \cite{gretton2012kernel} reads as,
\begin{equation*}
    \widehat{\mathcal{D}}^1_{\mathcal{S}}(X,Y) = \frac{1}{m(m-1)} \sum_{\substack{i,j=1 \\ i\neq j}}^m k_{\mathcal{S}}(x^i,x^j) - \frac{2}{mn} \sum_{i,j=1}^{m,n} k_{\mathcal{S}}(x^i,y^j) + \frac{1}{n(n-1)} \sum_{\substack{i,j=1 \\ i\neq j}}^n k_{\mathcal{S}}(y^i,y^j)
\end{equation*}
 Hence, in order to compute this estimator, we need to form the following three Gram matrices  $\mathbf{G}^{1}_{X,X}\in\mathbb{R}^{m\times m}$, $\mathbf{G}^{1}_{X,Y}\in\mathbb{R}^{m\times n}$ and $\mathbf{G}^{1}_{Y,Y}\in\mathbb{R}^{n\times n}$ such that, 
\begin{equation*}
    [\mathbf{G}^{1}_{X,X}]_{i,j}=k_{\mathcal{S}}(x^i,x^j) \quad   [\mathbf{G}^{1}_{X,Y}]_{i,j}=k_{\mathcal{S}}(x^i,y^j) \quad   [\mathbf{G}^{1}_{Y,Y}]_{i,j}=k_{\mathcal{S}}(y^i,y^j)
\end{equation*}
As explained at the begining of this section (and outlined in \Cref{alg:firstordergram}), this consists in two steps. Taking $\mathbf{G}^{1}_{X,Y}$ for example, first one forms $m\times n$ matrices of size $(P-1)\times (Q-1)$ each of the form, 
\begin{equation*}
    [M]_{p,q} = \langle x^i_{s_{p+1}}-x^i_{s_p},y^j_{t_{q+1}}-y^j_{t_q}\rangle_{V}
\end{equation*}
and then one solves $m\times n$ PDEs with \Cref{alg:pdesolve}. The full procedure is summarized in \Cref{alg:firstordermmd}, which has time complexity $\mathcal{O}(dm^2P^2)$ where $d$ is the number of coordinates of the paths $x$ and $y$. 

\begin{algorithm}
\caption{$\mathsf{PDESolve}$ \hfill $\mathcal{O}(P^2)$}
\label{alg:pdesolve}
\begin{algorithmic}[1]
\State {\bfseries Input:} matrix $M\in\mathbb{R}^{P\times Q}$, $\mathsf{full}\in\{\mathsf{True},\mathsf{False}\}$
\State {\bfseries Output:} full solution $u\in\mathbb{R}^{P\times Q}$ with $u[p,q]=k_{\mathcal{S}}(x_{[0,s_p]},y_{[0,t_q]})$ or $u[-1,-1]=k_{\mathcal{S}}(x,y)$
\vspace{5pt}
% \State{Initialize array $U$ of size $P\times Q$} 
\State{$u[1,:]\leftarrow 1$}
\State{$u[:,1]\leftarrow 1$}
\For{$p$ from $1$ to $P-1$}
\For{$q$ from $1$ to $Q-1$}
\State $u[p+1,q+1]\leftarrow f (u[p,q+1], u[p+1,q], u[p,q], M[p,q] )$
\EndFor
\EndFor
\State {\bfseries if} $\mathsf{full}$ {\bfseries then return} $u$ {\bfseries else}  {\bfseries return} $u[-1,-1]$
\end{algorithmic}
\end{algorithm}

\begin{algorithm}
\caption{$\mathsf{FirstOrderGram}$ \hfill $\mathcal{O}(dm^2P^2)$}
\label{alg:firstordergram}
\begin{algorithmic}[1]
\State {\bfseries Input:} sample paths $\{x^i\}_{i=1}^{m}\sim X$ and $\{y^j\}_{j=1}^{n}\sim Y$, $\mathsf{full}\in\{\text{True},\text{False}\}$
\State {\bfseries Output:} $G\in\mathbb{R}^{m\times n\times P\times Q}$ where $G[i,j,p,q]=k_{\mathcal{S}}(x^i_{[0,s_p]},y^j_{[0,t_q]})$ or $G[:,:,-1,-1]$
\vspace{5pt}
% \State Initialize array $M$ of size $m\times n\times
% \ell_x \times \ell_y$ with zeroes
% \State Initialize array $G$ of size $m\times n$ with zeros
\State $M[i,j,p,q] \leftarrow \langle x^i_{s_p}, y^j_{t_q}\rangle\quad \forall i\in\{1,\ldots,m\},~ j\in\{1,\ldots,n\}, p\in\{1,\ldots,P\},~ q\in\{1,\ldots,Q\}$ 
\State $M\leftarrow M[:,:,1{:},1{:}] + M[:,:,{:}\!-\!1,{:}\!-\!1] - M[:,:,1{:},{:}\!-\!1] - M[:,:,{:}\!-\!1,1{:}]$ 
\State $G[i,j]\leftarrow \mathsf{PDESolve}(M[i,j]),\quad \forall i\in\{1,\ldots,m\},~ j\in\{1,\ldots,n\}$ 
\State {\bfseries if} $\mathsf{full}$ {\bfseries then return} $G$ {\bfseries else}  {\bfseries return} $G[:,:,-1,-1]$
\end{algorithmic}
\end{algorithm}

\begin{algorithm}
\caption{$\mathsf{FirstOrderMMD}$ \hfill $\mathcal{O}(dm^2P^2)$}
\label{alg:firstordermmd}
\begin{algorithmic}[1]
\State {\bfseries Input:} sample paths $\{x^i\}_{i=1}^{m}\sim X$ and $\{y^j\}_{j=1}^{n}\sim Y$
\State {\bfseries Output:} an empirical estimator of the $1^{\text{st}}$ order MMD between $X$ and $Y$
\vspace{5pt}
%\State Initialize $G^1_{XX}\in\mathbb{R}^{m\times m}$, $G^1_{XY}\in\mathbb{R}^{m\times n}$ and $G^1_{YY}\in\mathbb{R}^{n\times n}$
\State $G^1_{XX}\leftarrow \mathsf{FirstOrderGram}(\{x^i\}_{i=1}^{m},\{x^i\}_{i=1}^{m},\mathsf{full}=\mathrm{False})$
\State $G^1_{XY}\leftarrow \mathsf{FirstOrderGram}(\{x^i\}_{i=1}^{m},\{y^j\}_{j=1}^{n},\mathsf{full}=\mathrm{False})$
\State $G^1_{YY}\leftarrow \mathsf{FirstOrderGram}(\{y^j\}_{j=1}^{n},\{y^j\}_{j=1}^{n},,\mathsf{full}=\mathrm{False})$
\vspace{5pt}
\State {\bfseries return} $\mathrm{avg}(G^1_{XX}) -2*\mathrm{avg}(G^1_{XY}) + \mathrm{avg}(G^1_{YY})$
\end{algorithmic}
\end{algorithm}

\subsection{Algorithm for the $2^{\text{nd}}$ order MMD}\label{ssec:second_order}
In the main paper, we derived the following estimator of the $2^{\text{nd}}$ order MMD,
\begin{equation*}
    \widehat{\mathcal{D}}^2_{\mathcal{S}}(X,Y) = \frac{1}{m(m-1)} \sum_{\substack{i,j=1 \\ i\neq j}}^m k_{\mathcal{S}}(\widetilde{x}^i,\widetilde{x}^j) - \frac{2}{mn} \sum_{i,j=1}^{m,n} k_{\mathcal{S}}(\widetilde{x}^i,\widetilde{y}^j) + \frac{1}{n(n-1)} \sum_{\substack{i,j=1 \\ i\neq j}}^n k_{\mathcal{S}}(\widetilde{y}^i,\widetilde{y}^j)
\end{equation*}
Compared to the $1^{\text{st}}$ order MMD, in order to compute this estimator, as outlined in \Cref{alg:secondordermmd}, we need to form the following three Gram matrices  $\mathbf{G}^{2}_{X,X}\in\mathbb{R}^{m\times m}$, $\mathbf{G}^{2}_{X,Y}\in\mathbb{R}^{m\times n}$ and $\mathbf{G}^{2}_{Y,Y}\in\mathbb{R}^{n\times n}$, 
\begin{equation*}
    [\mathbf{G}^{2}_{X,X}]_{i,j}=k_{\mathcal{S}}(\widetilde{x}^i,\widetilde{x}^j) \quad   [\mathbf{G}^{2}_{X,Y}]_{i,j}=k_{\mathcal{S}}(\widetilde{x}^i,\widetilde{y}^j) \quad   [\mathbf{G}^{2}_{Y,Y}]_{i,j}=k_{\mathcal{S}}(\widetilde{y}^i,\widetilde{y}^j)
\end{equation*}
As outlined in \Cref{alg:secondordergram}, this consists in two steps. Taking $\mathbf{G}^{2}_{X,Y}$ for example, first one forms $m\times n$ matrices of size $(P-1)\times (Q-1)$ each of the form, 
\begin{equation*}
    [M]_{p,q} = \langle \widetilde{x}_{s_{p+1}}-\widetilde{x}_{s_p},\widetilde{y}_{t_{q+1}}-\widetilde{y}_{t_q}\rangle_{\mathcal{H}_{\mathcal{S}}(V)}
\end{equation*}
(see \Cref{alg:innerprodpredcondKME}) and then one solves $m\times n$ PDEs with \Cref{alg:pdesolve}. This is summarized in \Cref{alg:secondordermmd}, which has time complexity $\mathcal{O}((d+m)m^2P^2)$ where $d$ is the number of coordinates of the paths $x$ and $y$. 
\begin{algorithm}
\caption{$\mathsf{SecondOrderGram}$ \hfill $\mathcal{O}(P^2m^3)$}
\label{alg:secondordergram}
\begin{algorithmic}[1]
\State {\bfseries Input:} $G_{XX}, G_{XY}, G_{YY}$ with 
$G_{XY}[i,j,p,q]=k_{\mathcal{S}}(x^i_{[0,s_p]},y^j_{[0,t_q]})$ and hyperparameter $\lambda$.
\State {\bfseries Output:} an empirical estimator of $G^2_{X,Y}\in\mathbb{R}^{m\times n}$, where $G^2_{X,Y}[i,j]=k_{\mathcal{S}}(\widetilde{x}^i,\widetilde{y}^j)$
\vspace{5pt}
% \State Initialize array $M$ of size $m\times n\times
% P \times Q$ with zeroes
% \State Initialize array $G^2$ of size $m\times n$ with zeros
\State $M \leftarrow\mathsf{InnerProdPredCondKME}(G_{XX},G_{XY},G_{YY},\lambda)$
\State $M\leftarrow M[:,:,1{:},1{:}] + M[:,:,{:}\!-\!1,{:}\!-\!1] - M[:,:,1{:},{:}\!-\!1] - M[:,:,{:}\!-\!1,1{:}]$
\State $G^2_{XY}[i,j]\leftarrow \mathsf{PDESolve}(M[i,j]),\quad \forall i\in\{1,\ldots,m\},~\forall j\in\{1,\ldots,n\}$ 
\State {\bfseries return} $G^2_{XY}$
\end{algorithmic}
\end{algorithm}

\begin{algorithm}
\caption{$\mathsf{SecondOrderMMD}$  \hfill $\mathcal{O}(dm^2P^2+P^2m^3)$}
\label{alg:secondordermmd}
\begin{algorithmic}[1]
\State {\bfseries Input:} sample paths $\{x^i\}_{i=1}^{m}\sim X$ and $\{y^j\}_{j=1}^{n}\sim Y$, hyperparameter $\lambda$.
\State {\bfseries Output:} an empirical estimator of the $2^{\text{nd}}$ order MMD between $X$ and $Y$
\vspace{5pt}
% \State Initialize $G^1_{XX}\in\mathbb{R}^{m,m,\ell_x,\ell_x}$, $G^1_{XY}\in\mathbb{R}^{m,n,\ell_x,\ell_y}$ and $G^1_{YY}\in\mathbb{R}^{n,n,\ell_y,\ell_y}$
\State $G^1_{XX}\leftarrow \mathsf{FirstOrderGram}(\{x^i\}_{i=1}^{m},\{x^i\}_{i=1}^{m},\mathsf{full}=\mathrm{True})$
\State $G^1_{XY}\leftarrow \mathsf{FirstOrderGram}(\{x^i\}_{i=1}^{m},\{y^j\}_{j=1}^{n},\mathsf{full}=\mathrm{True})$
\State $G^1_{YY}\leftarrow \mathsf{FirstOrderGram}(\{y^j\}_{j=1}^{n},\{y^j\}_{j=1}^{n},\mathsf{full}=\mathrm{True})$
\vspace{5pt}
% \State Initialize $G^2_{XX}\in\mathbb{R}^{m,m}$, $G^2_{XY}\in\mathbb{R}^{m,n}$ and $G^2_{YY}\in\mathbb{R}^{n,n}$
\State $G^2_{XX}\leftarrow \mathsf{SecondOrderGram}(G^1_{XX},G^1_{XX},G^1_{XX},\lambda)$
\State $G^2_{XY}\leftarrow \mathsf{SecondOrderGram}(G^1_{XX},G^1_{XY},G^1_{YY},\lambda)$
\State $G^2_{YY}\leftarrow \mathsf{SecondOrderGram}(G^1_{YY},G^1_{YY},G^1_{YY},\lambda)$
\vspace{5pt}
\State {\bfseries return} $\mathrm{avg}(G^2_{XX}) -2*\mathrm{avg}(G^2_{XY}) + \mathrm{avg}(G^2_{YY})$
\end{algorithmic}
\end{algorithm}
\begin{algorithm}[H]
\caption{$\mathsf{InnerProdPredCondKME}$ \hfill $\mathcal{O}(P^2m^3)$}
\label{alg:innerprodpredcondKME}
\begin{algorithmic}[1]
\State {\bfseries Input:} three Gram matrices $G_{XX}, G_{XY}, G_{YY}$ and hyperparameter $\lambda$
\State {\bfseries Output:} returns an empirical estimator of $M\in\mathbb{R}^{m\times n\times P\times Q}$ where $M[i,j,p,q]=\langle\widetilde{x}^i_{s_p},\widetilde{y}^j_{t_q}\rangle$
% \State Initialize an $m\times m\times P$ array $W_X$ with zeroes
\State $W_X[:,:,p]\leftarrow (G_{XX}[:,:,p,p]+m\lambda I)^{-1}, \ \ \forall p\in\{1,\ldots,P\}$ 
% \State Initialize an $n\times n\times Q$ array $W_Y$ with zeroes
\State $W_Y[:,:,q]\leftarrow (G_{YY}[:,:,q,q]+n\lambda I)^{-1}, \ \ \forall q\in\{1,\ldots,Q\}$ 
% \State Initialize an $m\times n\times P\times Q$ array $M$ with zeroes
\For{$p$ from $1$ to $P$}
\For{$q$ from $1$ to $Q$}
\State $M[:,:,p,q]\leftarrow G_{XX}[:,:,p,p]^T W_X[:,:,p] G_{XY}[:,:,-1,-1] W_Y[:,:,q] G_{YY}[:,:,q,q]$
\EndFor
\EndFor
\end{algorithmic}
\end{algorithm}

\subsection{Algorithm for higher order MMDs}\label{ssec:higher_order_algorithm}
Now, we generalize the procedure in \Cref{ssec:second_order} for computing an estimator of $\mathcal{D}_{\mathcal{S}}^{k+1}$ when $k>1$,
\begin{equation*}
    \widehat{\mathcal{D}}^{k+1}_{\mathcal{S}}(X,Y) = \frac{1}{m(m-1)} \sum_{\substack{i,j=1 \\ i\neq j}}^m k_{\mathcal{S}}(\widetilde{x}^{k,i},\widetilde{x}^{k,j}) - \frac{2}{mn} \sum_{i,j=1}^{m,n} k_{\mathcal{S}}(\widetilde{x}^{k,i},\widetilde{y}^{k,j}) + \frac{1}{n(n-1)} \sum_{\substack{i,j=1 \\ i\neq j}}^n k_{\mathcal{S}}(\widetilde{y}^{k,i},\widetilde{y}^{k,j}),
\end{equation*}
where $\widetilde{x}^{k,i}$ and $\widetilde{y}^{k,j}$ denote sample paths from the processes $\mu^{k}_{X|\mathcal{F}_X}$ and $\mu^{k}_{Y|\mathcal{F}_Y}$ respectively. In order to compute this estimator, as outlined in \Cref{alg:secondordermmd}, we need to form the following three Gram matrices  $\mathbf{G}^{k+1}_{X,X}\in\mathbb{R}^{m\times m}$, $\mathbf{G}^{k+1}_{X,Y}\in\mathbb{R}^{m\times n}$ and $\mathbf{G}^{k+1}_{Y,Y}\in\mathbb{R}^{n\times n}$, 
\begin{equation*}
    [\mathbf{G}^{k+1}_{X,X}]_{i,j}=k_{\mathcal{S}}(\widetilde{x}^{k,i},\widetilde{x}^{k,j}) \quad   [\mathbf{G}^{k+1}_{X,Y}]_{i,j}=k_{\mathcal{S}}(\widetilde{x}^{k,i},\widetilde{y}^{k,j}) \quad   [\mathbf{G}^{k+1}_{Y,Y}]_{i,j}=k_{\mathcal{S}}(\widetilde{y}^{k,i},\widetilde{y}^{k,j})
\end{equation*}
As outlined in \Cref{alg:secondordergram}, this consists in two steps. Taking $\mathbf{G}^{k+1}_{X,Y}$ for example, first one forms $m\times n$ matrices of size $(P-1)\times (Q-1)$ each of the form, 
\begin{equation*}
    [M]_{p,q} = \langle \widetilde{x}^{k}_{s_{p+1}}-\widetilde{x}^{k}_{s_p},\widetilde{y}^{k}_{t_{q+1}}-\widetilde{y}^{k}_{t_q}\rangle_{\mathcal{H}^{k}_{\mathcal{S}}(V)}
\end{equation*}
(see \Cref{alg:innerprodpredcondKME}) and then one solves $m\times n$ PDEs with \Cref{alg:pdesolve}. This is summarized in \Cref{alg:secondordermmd}, which has time complexity $\mathcal{O}((d+km)m^2P^2)$ where $d$ is the number of coordinates of the paths $x$ and $y$.

\begin{algorithm}
\caption{$\mathsf{HigherOrderGram}$ \hfill $\mathcal{O}(P^2m^3)$}
\label{alg:secondordergram}
\begin{algorithmic}[1]
\State {\bfseries Input:} $G^{k}_{XX}, G^{k}_{XY}, G^{k}_{YY}$ with 
$G^{k}_{XY}[i,j,p,q]=k_{\mathcal{S}}(\widetilde{x}^{k-1,i}_{[0,s_p]},\widetilde{y}^{k-1,j}_{[0,t_q]})$ and hyperparameter $\lambda$.
\State {\bfseries Output:} an estimator of $G^{k+1}_{X,Y}\in\mathbb{R}^{m\times n\times P\times Q}$, where $G^{k+1}_{X,Y}[i,j,p,q]=k_{\mathcal{S}}(\widetilde{x}^{k,i}_{[0,s_p]},\widetilde{y}^{k,j}_{[0,t_q]})$
\vspace{5pt}
% \State Initialize array $M$ of size $m\times n\times
% P \times Q$ with zeroes
% \State Initialize array $G^2$ of size $m\times n$ with zeros
\State $M \leftarrow\mathsf{InnerProdPredCondKME}(G^{k}_{XX},G^{k}_{XY},G^{k}_{YY},\lambda)$
\State $M\leftarrow M[:,:,1{:},1{:}] + M[:,:,{:}\!-\!1,{:}\!-\!1] - M[:,:,1{:},{:}\!-\!1] - M[:,:,{:}\!-\!1,1{:}]$
\State $G^{k+1}_{XY}[i,j]\leftarrow \mathsf{PDESolve}(M[i,j],\mathsf{full}=\mathrm{True}),\quad \forall i\in\{1,\ldots,m\},~\forall j\in\{1,\ldots,n\}$ 
\State {\bfseries return} $G^{k+1}_{XY}$
\end{algorithmic}
\end{algorithm}

\begin{algorithm}
\caption{$\mathsf{HigherOrderMMD}$  \hfill $\mathcal{O}(dm^2P^2+(k-1)P^2m^3)$}
\label{alg:secondordermmd}
\begin{algorithmic}[1]
\State {\bfseries Input:} sample paths $\{x^i\}_{i=1}^{m}\sim X$ and $\{y^j\}_{j=1}^{n}\sim Y$, hyperparameter $\lambda$, order $k$.
\State {\bfseries Output:} an empirical estimator of the $k^{\text{th}}$ order MMD between $X$ and $Y$
\vspace{5pt}
% \State Initialize $G^1_{XX}\in\mathbb{R}^{m,m,\ell_x,\ell_x}$, $G^1_{XY}\in\mathbb{R}^{m,n,\ell_x,\ell_y}$ and $G^1_{YY}\in\mathbb{R}^{n,n,\ell_y,\ell_y}$
\State $G_{XX}\leftarrow \mathsf{FirstOrderGram}(\{x^i\}_{i=1}^{m},\{x^i\}_{i=1}^{m},\mathsf{full}=\mathrm{True})$
\State $G_{XY}\leftarrow \mathsf{FirstOrderGram}(\{x^i\}_{i=1}^{m},\{y^j\}_{j=1}^{n},\mathsf{full}=\mathrm{True})$
\State $G_{YY}\leftarrow \mathsf{FirstOrderGram}(\{y^j\}_{j=1}^{n},\{y^j\}_{j=1}^{n},\mathsf{full}=\mathrm{True})$
\vspace{5pt}
% \State Initialize $G^2_{XX}\in\mathbb{R}^{m,m}$, $G^2_{XY}\in\mathbb{R}^{m,n}$ and $G^2_{YY}\in\mathbb{R}^{n,n}$
\For{$\text{order}$ from $2$ to $k$}
\State $G^{\mathsf{new}}_{XX}\leftarrow \mathsf{HigherOrderGram}(G_{XX},G_{XX},G_{XX},\lambda)$
\State $G^{\mathsf{new}}_{XY}\leftarrow \mathsf{HigherOrderGram}(G_{XX},G_{XY},G_{YY},\lambda)$
\State $G^{\mathsf{new}}_{YY}\leftarrow \mathsf{HigherOrderGram}(G_{YY},G_{YY},G_{YY},\lambda)$
\State $G_{XX}, G_{XY}, G_{YY} \leftarrow G^{\mathsf{new}}_{XX}, G^{\mathsf{new}}_{XY}, G^{\mathsf{new}}_{YY} $
\EndFor
\vspace{5pt}
\State {\bfseries return} $\mathrm{avg}(G_{XX}[:,:,-1,-1]) -2*\mathrm{avg}(G_{XY}[:,:,-1,-1]) + \mathrm{avg}(G_{YY}[:,:,-1,-1])$
\end{algorithmic}
\end{algorithm}

\section{Experimental details}\label{sec:experimentaldetails}
We start with further experimental details for the applications of higher order distribution regression to quantitative finance (\Cref{sec:dr_finance}), where we consider the problem of optimally stopping fractional Brownian motions with different hurst exponents. 
% \subsection{Hypothesis testing on filtrations}
% \textcolor{red}{Describe additional experiment +Type I error}
% \begin{figure}[h]
%     \centering
%     \includegraphics[scale=0.4]{}
%     \caption{Caption}
%     \label{fig:my_label}
% \end{figure}

\subsection{Rough volatility}

Rough volatility models constitute a class of models that are empirically well-tailored to fit observed implied market volatilities in the context of option pricing for short maturity assets. The basic model for option pricingis called the Black-Scholes model in which the volatility is assumed to be constant. Stochastic volatility models are extensions of the Black-Scholes model to the case where the volatility is itself stochastic. The main shortcoming of such stochastic volatility models is that they are able to capture the true steepness of the implied volatility smile close to maturity (see \cite{bayer2016pricing} for extra details). This is where rough volatility models become useful. Among them, the rough Bergomi model introduced by \cite{bayer2016pricing}, stood out for its ability to explain implied volatility and other phenomena related to European options.

\subsection{Higher order distribution regression}
\paragraph{Data} We use the data generator from \url{https://github.com/HeKrRuTe/OptStopRandNN} to simulate sample paths from $X$ a fractional Brownian motion (fBm) and obtain the solution of the optimal stopping time problem $\sup_{\tau}\mathbb{E}[g(X_\tau)|X_0]$. We note that although fBm is not typically used as a stock price model in quantitative finance, it is nevertheless considered a respected challenging example for optimal stopping algorithms \cite{herrera2021optimal,becker2019deep}. 

\paragraph{Models} We use a kernel Ridge regressor with different distribution regression kernels. Each is of the form $K(X,Y) = \exp(-\mathcal{D}(X,Y)^2/\sigma^2)$ where $\mathcal{D}(X,Y)$ is a maximum mean discrepancy. The models $K^1_\mathcal{S}$ and $K^2_\mathcal{S}$ correspond to the $1^{\text{st}}$ and $2^{\text{nd}}$ order maximum mean discrepancies $\mathcal{D}^1_{\mathcal{S}}$ and  $\mathcal{D}^2_{\mathcal{S}}$. We consider two other baselines (Mat\'{e}rn and RBF) for which the MMD is computed using the Matern 3/2 covariance function $k_{\text{mat32}}$, and the RBF covariance function $k_{\text{rbf}}$,
\begin{align*}
    k_{\text{mat32}}(x,y)=\left(1+\frac{\sqrt{3}}{\gamma^2}\norm{x-y}\right)\exp\left(-\frac{\sqrt{3}}{\gamma^2}\norm{x-y}\right), \quad k_{\text{rbf}}(x,y) = \exp\left(-\frac{\norm{x-y}^2}{\gamma^2}\right)
\end{align*}
All models are run $3$ times. 
The hyperparameters of all models are selected by cross-validation via a grid search on the training set ($70\%$ of the data selected at random) of each run. 

\subsubsection{Inferring causal graph for interacting bodies}
We provide further details for the last application (\Cref{sec:causality}) where the task is to infer whether any two bodies are connected by a spring from multiple observations of their 2D trajectories.  

\paragraph{Data}
We adapt the Pymunk simulator from \cite{li2020causal} publicly available at \url{https://github.com/pairlab/v-cdn}. For each pair of balls, there is a one-half probability that they are connected by nothing, or a spring. For each graph we run multiple episodes each of $20$ time steps. At the beginning of each episode,
we randomly assign the balls in different positions. The stiffness of the spring
relation is set to $20$, and we randomly sample the rest length between $[20, 120]$. 

\paragraph{Causal discovery algorithm} The PC algorithm \cite{spirtes2000causation} uses conditional independence tests to generate a causal graph from a dataset. The PC algorithm consists in two stages. The first stage, referred to as the \textit{skeleton phase}, consists in finding the structure of the causal graph. In the second stage, the edges are oriented by repetitively applying orientation rules. In the multi-body interaction example, we only need to perform the skeleton phase, which is sketched hereafter,
\begin{enumerate}
    \item Start with a complete graph 
    \item For each $X$ and $Y$ which are still connected. If there is a third variable $Z_1$ connected to $X$ or $Y$, such that
$X \indep Y\mid Z_1$, remove the edge between $X$ and $Y$.  
\item For each $X$ and $Y$ which are still connected, if there is a third and a fourth variable 
$Z_1$ and $Z_2$ connected to $X$ or $Y$ such that  $X \indep Y\mid Z_1,Z_2$, remove the edge between $X$ and $Y$.  
\item  Iteratively increase the cardinality of the set of variables on which to condition.
\end{enumerate}

To test for conditional independence we use the Hilbert-Schmidt conditional independence criterion $H_{XY|Z}$ for stochastic processes (\Cref{ssec:HSICstopro}). The combination of the PC algorithm with a kernel-based dependence measure has been used in \cite{sun2007kernel} and \cite{tillman2009nonlinear} where it is termed kPC. However, to our knowledge it has never been used in conjunction with a kernel-based measure of dependence for multidimensional stochastic processes.

Since the null distribution of the test statistics $H_{YX|Z}$ is not known, one possibility would be to use a permutation approach as in \cite[Sec 2]{tillman2009nonlinear}. However the latter is not computationally efficient. We leave the development of a faster approach for future work, and adopt the approach \cite{sun2007kernel} for this experiment. That is we use a threshold $\alpha$ and remove an edge if there is a $Z$ such that $H_{XY|Z}<\alpha$. We repeat $15$ times the causal discovery procedure and use $30\%$ of the runs to fix $\alpha$.

All experiments in \Cref{sec:applications} have been run on a P100 GPU to leverage an efficient dedicated CUDA implementation of the signature kernel. 

\section{Proofs}\label{sec:proofs}

\begin{theorem}\label{theorem: rank 2 MMD}
Given two stochastic processes $X,Y$
\begin{align*}
    \mathcal{D}^2_{\mathcal{S}}(X,Y)  = 0 
    \iff \mathbb{P}_{X\mid \mathcal{F}_{X}}  = \mathbb{P}_{Y \mid \mathcal{F}_Y} \quad
\end{align*}
Furthermore
\begin{equation*}
    \mathcal{D}^2_{\mathcal{S}}(X,Y) = 0 \implies \mathcal{D}^1_{\mathcal{S}}(X,Y) = 0
\end{equation*}
but the converse is not generally true.
\end{theorem}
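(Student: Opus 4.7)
The plan is to leverage the characteristic property of the signature kernel $k_\mathcal{S}$ twice---once at the level of paths in $\mathcal{X}(\mathcal{H}_\mathcal{S}(V))$ and once at the level of paths in $\mathcal{X}(V)$---in order to translate the vanishing of $\mathcal{D}^2_\mathcal{S}$ into equality of the conditional-law processes. First, I would observe that $\mu^2_X$ is by construction the first-order KME on $\mathcal{X}(\mathcal{H}_\mathcal{S}(V))$ associated to the law of the predictive KME process $\mu^1_{X\mid\mathcal{F}_X}$, and similarly for $Y$. Since $k_\mathcal{S}$ is characteristic on $\mathcal{X}(\mathcal{H}_\mathcal{S}(V))$ (using that $\mathcal{H}_\mathcal{S}(V)$ is a separable Hilbert space, as noted at the end of \Cref{sec:higher_order}), the equality $\mathcal{D}^2_\mathcal{S}(X,Y)=0$ is equivalent to $\mathbb{P}_{\mu^1_{X\mid\mathcal{F}_X}}=\mathbb{P}_{\mu^1_{Y\mid\mathcal{F}_Y}}$, i.e.\ the two predictive KMEs agree in law as $\mathcal{H}_\mathcal{S}(V)$-valued stochastic processes.

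Next, I would upgrade this to equality of the laws of the regular conditional-distribution processes. The key fact is that the first-order KME map $\mathbb{P}\mapsto \mu^1_\mathbb{P}$ is injective on $\mathcal{P}(\mathcal{X}(V))$ (again by characteristicness of $k_\mathcal{S}$) and Borel measurable. Applying it pathwise to the measure-valued process $t\mapsto\mathbb{P}_{X\mid\mathcal{F}_{X_t}}$ produces the predictive KME $\mu^1_{X\mid\mathcal{F}_X}$ as a deterministic measurable image, so the push-forward of $\mathbb{P}_{X\mid\mathcal{F}_X}$ under this pathwise map is $\mathbb{P}_{\mu^1_{X\mid\mathcal{F}_X}}$. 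Injectivity of this push-forward on Polish spaces then yields
\begin{equation*}
\mathbb{P}_{\mu^1_{X\mid\mathcal{F}_X}}=\mathbb{P}_{\mu^1_{Y\mid\mathcal{F}_Y}} \iff \mathbb{P}_{X\mid\mathcal{F}_X}=\mathbb{P}_{Y\mid\mathcal{F}_Y},
\end{equation*}
closing the biconditional.

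For the implication $\mathcal{D}^2_\mathcal{S}(X,Y)=0\Rightarrow\mathcal{D}^1_\mathcal{S}(X,Y)=0$, I would take Bochner expectations in $\mathcal{H}_\mathcal{S}(V)$. By the tower property, $\mathbb{E}[\mu^1_{X\mid\mathcal{F}_{X_t}}]=\mu^1_X$ for every $t$, and likewise for $Y$. Since equality in law of the two predictive KME processes entails equality of their mean paths, we deduce $\mu^1_X=\mu^1_Y$, hence $\mathcal{D}^1_\mathcal{S}(X,Y)=0$. For the failure of the converse I would simply point to the binary-tree example from \Cref{fig:example}: the processes $X^n$ and $X$ there can be made to share (arbitrarily closely) the same marginal law while carrying manifestly different filtrations, so the second-order MMD separates them while the first-order MMD does not.

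The main obstacle I expect is the measurability and path-regularity bookkeeping needed to apply the push-forward injectivity argument rigorously to the \emph{random measure-valued} process $t\mapsto\mathbb{P}_{X\mid\mathcal{F}_{X_t}}$. The paper's standing assumptions---that $V$ is separable, that the sample paths of $\mu^1_{X\mid\mathcal{F}_X}$ live in a compact subset of $\mathcal{X}(\mathcal{H}_\mathcal{S}(V))$ (remark at the end of \Cref{sec:cond_filtration}), and that all iterated RKHSs are separable---look tailored precisely to make this step go through via Kuratowski's theorem on Borel-measurable injections between Polish spaces, and that is the tool I would invoke to formalise the push-forward step.
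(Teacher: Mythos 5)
Your strategy---leveraging the characteristic property of $k_\mathcal{S}$ at the level of $\mathcal{X}(\mathcal{H}_\mathcal{S}(V))$ to identify $\mathcal{D}^2_\mathcal{S}(X,Y)=0$ with $\mathbb{P}_{\mu^1_{X\mid\mathcal{F}_X}}=\mathbb{P}_{\mu^1_{Y\mid\mathcal{F}_Y}}$, and then pulling back to $\mathbb{P}_{X\mid\mathcal{F}_X}=\mathbb{P}_{Y\mid\mathcal{F}_Y}$ via injectivity of the pathwise KME map---is essentially the paper's approach. Two technical differences are worth noting. Where you invoke Kuratowski's theorem to push a Borel-measurable injection through, the paper observes more directly that, under the standing compactness assumption on $\mathcal{X}(V)$, the KME map $\mathbb{P}\mapsto\int k_\mathcal{S}(\cdot,x)\,\mathbb{P}(dx)$ is a homeomorphism onto its image (continuous and injective from a compact space into a Hausdorff one), so the induced pushforward on laws is automatically bijective; this sidesteps the measurability bookkeeping you flagged as the main obstacle, and is the same tool that closes the second equivalence $\mathbb{P}_{\mu^1_{X\mid\mathcal{F}_X}}=\mathbb{P}_{\mu^1_{Y\mid\mathcal{F}_Y}}\iff\mu^2_X=\mu^2_Y$. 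Your argument for $\mathcal{D}^2_\mathcal{S}=0\Rightarrow\mathcal{D}^1_\mathcal{S}=0$ via the tower property of Bochner expectations is a valid alternative to the paper's one-line observation that equality of the conditional-law processes forces $\mathbb{P}_X=\mathbb{P}_Y$ directly.

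On the failure of the converse, however, there is a genuine gap in your argument as stated. For any finite $n$, the processes $X^n$ and $X$ in \Cref{fig:example} have \emph{distinct} marginal laws, so $\mathcal{D}^1_\mathcal{S}(X^n,X)>0$; the figure illustrates that $\mathcal{D}^1_\mathcal{S}\to 0$ while $\mathcal{D}^2_\mathcal{S}$ stays bounded away from zero, which shows the two pseudometrics induce different topologies, but does not by itself exhibit a pair with $\mathcal{D}^1_\mathcal{S}=0$ and $\mathcal{D}^2_\mathcal{S}>0$. To get the latter you need the degenerate $n\to\infty$ limit, where $X^\infty$ and $X$ have identical marginal laws yet $X^\infty$'s filtration already reveals the eventual branch at every $t>0$ while $X$'s does not---this is precisely \cite[Example 3.1]{Hoover84adapt}, which is what the paper cites for this step.
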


\begin{proof}
First we note that by a standard result in signature kernel learning theory. e.g., \cite{chevyrev2018signature}, for $X \in \mathcal{X}(V)$ and every $t$, the mapping
$$
\mathbb{P}_{X|\mathcal{F}_{X_t}} \mapsto \mu^1_{X|\mathcal{F}_{X_t}} = \int k_{\mathcal{S}}(\cdot,x)\mathbb{P}_{X|\mathcal{F}_{X_t}}(dx)
$$
is a homeomorphism (with respect to weak topology and Hilbert space topology); in particular, we have
$$
\mathbb{P}_{X|\mathcal{F}_X} = \mathbb{P}_{Y|\mathcal{F}_Y} \iff \mathbb{P}_{\mu^1_{X|\mathcal{F}_{X}}} = \mathbb{P}_{\mu^1_{Y|\mathcal{F}_{Y}}}.
$$
Then, using the same argument for $\mathbb{P}_{\mu^1_{X|\mathcal{F}_{X}}}$ and $\mathbb{P}_{\mu^1_{Y|\mathcal{F}_{Y}}}$, we can  further deduce that
$$
\mathbb{P}_{\mu^1_{X|\mathcal{F}_{X}}} = \mathbb{P}_{\mu^1_{Y|\mathcal{F}_{Y}}} \iff \int k_{\mathcal{S}}(\cdot,x)\mathbb{P}_{\mu^1_{X|\mathcal{F}_X}}(dx) (= \mu^2_X) =  \int k_{\mathcal{S}}(\cdot,y)\mathbb{P}_{\mu^1_{Y|\mathcal{F}_Y}}(dy)(= \mu^2_Y).
$$
Since by definition it holds that $\mathcal{D}^2_{\mathcal{S}}(X,Y) = \|\mu^2_X - \mu^2_Y\|_{\mathcal{H}^2(V)}$, we complete the proof of the first claim in this theorem.

For the second claim, it is easy to see that by definition $\mathbb{P}_{X|\mathcal{F}_X} = \mathbb{P}_{Y|\mathcal{F}_Y}$ ensures that $\mathbb{P}_X = \mathbb{P}_Y$, and therefore the implication that $\mathcal{D}^2_{\mathcal{S}}(X,Y) = 0 \implies \mathcal{D}^1_{\mathcal{S}}(X,Y) = 0$ follows immediately from the fact that $\mathbb{P}_X = \mathbb{P}_Y \iff \mathcal{D}^1_{\mathcal{S}}(X,Y) = 0$. Moreover, we refer readers to \cite[Example 3.1]{Hoover84adapt} for a simple example which shows that there exist processes $X$ and $Y$ with $\mathcal{D}^1_{\mathcal{S}}(X,Y) = 0$ but $\mathcal{D}^2_{\mathcal{S}}(X,Y) > 0$.
\end{proof}

\begin{theorem}
$\widehat{\mathcal{D}}^2_{\mathcal{S}}(X,Y)$ is a consistent estimator for the $2^{\text{nd}}$ order MMD, i.e.
\begin{equation}
    |\widehat{\mathcal{D}}^2_{\mathcal{S}}(X,Y) - \mathcal{D}^2_{\mathcal{S}}(X,Y)| \overset{p}{\to} 0 \quad \text{ as } \ m,n \to\infty
\end{equation}
with $\{x^i\}_{i=1}^m \sim X$, $\{y^i\}_{i=1}^n \sim Y$ and where convergence is in probability.
\end{theorem}

\begin{proof}
Recall that given $m$ independent sample paths $\{x^i\}_{i=1}^m \sim X$, we can use the estimator in \cref{eqn:estimator_x} to approximate sample paths $\{\widetilde{x}^i\}_{i=1}^m$ from the $1^{\text{st}}$ order predictive KME $\mu^1_{X \mid \mathcal{F}_X}$. Hence, it suffices to prove the following claim.

\paragraph{Claim:} consider $m$ independent sample paths $\{\widetilde{x}^i\}_{i=1}^{m} \sim \mu^1_{X \mid \mathcal{F}_X}$. Then, the estimator given by $\widehat{\mu}^2_X = \frac{1}{m}\sum_{i=1}^mk_\mathcal{S}(\widetilde{x}^i, \cdot)$ is consistent for the $2^{\text{nd}}$ order predictive KME $\mu^2_X$, i.e. 
\begin{equation}\label{eq: convergence of empirical estimates}
    \norm{\widehat{\mu}^2_X - \mu^2_X}_{\mathcal{H}^{ 2}_{\mathcal{S}}(V)}^2 \overset{p}{\to} 0, \text{ as } m \to \infty
\end{equation}

By the triangular inequality 
\begin{align}
\norm{\widehat{\mu}^2_X - \mu^2_X}_{\mathcal{H}^{ 2}_{\mathcal{S}}(V)}^2 &=  \norm{ \frac{1}{m}\sum_{i=1}^m k_{\mathcal{S}}(\widetilde{x}^i,\cdot) - \mathbb{E}[k_{\mathcal{S}}(\mu_{X \mid \mathcal{F}_X}^1,\cdot)]}_{\mathcal{H}^{ 2}_{\mathcal{S}}(V)}^2\\
& \leq \norm{ \frac{1}{m}\sum_{i=1}^{m} k_{\mathcal{S}}(\widetilde{x}^i,\cdot) - \mathbb{E}[k_{\mathcal{S}}(\widehat{\mu}_{X \mid \mathcal{F}_X}^1,\cdot)]}_{\mathcal{H}^{ 2}_{\mathcal{S}}(V)}^2 \label{eqn:28}\\
& + \norm{ \mathbb{E}[k_{\mathcal{S}}(\widehat{\mu}_{X \mid \mathcal{F}_X}^1,\cdot)] - \mathbb{E}[k_{\mathcal{S}}(\mu_{X \mid \mathcal{F}_X}^1,\cdot)]}_{\mathcal{H}^{ 2}_{\mathcal{S}}(V)}^2
\end{align}
The term in (\ref{eqn:28}) converges to $0$ as $m\to\infty$ by the weak law of large numbers. Therefore, it remains to show that 
\begin{equation}
    \norm{ \mathbb{E}_{X}[k_{\mathcal{S}}(\widehat{\mu}_{X \mid \mathcal{F}_X}^1,\cdot)] - \mathbb{E}_{X}[k_{\mathcal{S}}(\mu_{X \mid \mathcal{F}_X}^1,\cdot)]}_{\mathcal{H}^2_{\mathcal{S}}(V)} \overset{p}{\to} 0, \text{ as } m \to \infty
\end{equation}
First note the following upper bound
\begin{equation*}
    \norm{ \mathbb{E}_{X}[k_{\mathcal{S}}(\widehat{\mu}_{X \mid \mathcal{F}_X}^1,\cdot)] - \mathbb{E}_{X}[k_{\mathcal{S}}(\mu_{X \mid \mathcal{F}_X}^1,\cdot)]}_{\mathcal{H}^{ 2}_{\mathcal{S}}(V)} \leq \mathbb{E}_{X}\norm{k_{\mathcal{S}}(\widehat{\mu}_{X \mid \mathcal{F}_X}^1,\cdot) - k_{\mathcal{S}}(\mu_{X \mid \mathcal{F}_X}^1,\cdot)}_{\mathcal{H}^{ 2}_{\mathcal{S}}(V)}
\end{equation*}
We will show convergence of the right-hand-side. By \cite[Theorem 3.4]{park2021LS}, for every $t=1,\ldots,T$
\begin{equation}\label{eq: convergence in L2 norm}
    \mathbb{E}_{X|\mathcal{F}_{X_t}}\norm{\widehat{\mu}_{X \mid \mathcal{F}_{X_t}}^1 - \mu^1_{X\mid\mathcal{F}_{X_t}} }_{\mathcal{H}_{\mathcal{S}}(V)}^2 \overset{p}{\to} 0 \quad \text{ as }m\to\infty
\end{equation}
Now let us assume that the above convergences also hold almost surely for every $t =1,\ldots,T$. Then by the Egorov's theorem, for any $\delta>0$, there is a subset $\Omega_\delta$ with $\mathbb{P}(\Omega_\delta) > 1 - \delta$ and the above convergence \eqref{eq: convergence in L2 norm} holds uniformly on $\Omega_\delta$.
This implies that on $\Omega_\delta$ for every $\varepsilon > 0$ there is an $N(\varepsilon)$ such that for all $m \ge N(\varepsilon)$ and all $t = 1,\ldots,T$, it holds that
\begin{equation}
    \mathbb{E}_{X|\mathcal{F}_{X_t}}\norm{\widehat{\mu}_{X \mid \mathcal{F}_{X_t}}^1 - \mu^1_{X\mid\mathcal{F}_{X_t}} }_{\mathcal{H}_{\mathcal{S}}(V)}^2 \le \varepsilon^2
\end{equation}
From this estimate we immediately obtain by the triangle inequality on $\Omega_\delta$
\begin{equation}\label{eq: upper bound on norm}
\mathbb{E}_{X|\mathcal{F}_{X_t}}\norm{\widehat{\mu}_{X \mid \mathcal{F}_{X_t}}^1 }_{\mathcal{H}_{\mathcal{S}}(V)}^2 \le 2 \mathbb{E}_{X|\mathcal{F}_{X_t}}\norm{\mu^1_{X\mid\mathcal{F}_{X_t}} }_{\mathcal{H}_{\mathcal{S}}(V)}^2+ 2\varepsilon^2,
\end{equation}
and, by the Chebyshev's inequality, on $\Omega_\delta$, $\forall t = 1,\ldots,T$, $\forall m \ge N(\varepsilon)$
\begin{align*}
    \mathbb{P}_{X|\mathcal{F}_{X_t}}\left[\norm{\widehat{\mu}_{X \mid \mathcal{F}_{X_t}}^1 - \mu^1_{X\mid\mathcal{F}_{X_t}} }_{\mathcal{H}_{\mathcal{S}}(V)} > \sqrt\varepsilon \right] \leq \frac{1}{\varepsilon}\mathbb{E}_{X|\mathcal{F}_{X_t}}\norm{\widehat{\mu}_{X \mid \mathcal{F}_{X_t}}^1 - \mu^1_{X\mid\mathcal{F}_{X_t}} }_{\mathcal{H}_{\mathcal{S}}(V)}^2 \leq \frac{1}{\varepsilon}\varepsilon^2 = \varepsilon,
\end{align*}
which implies that on $\Omega_\delta$, $\forall t = 1,\ldots,T$, the sequence $\widehat{\mu}_{X \mid \mathcal{F}_{X_t}}^1$ converges to $\mu^1_{X\mid\mathcal{F}_{X_t}}$ in probability with respect to $\mathbb{P}_X$. By a standard result in rough path theory \cite{lyons1998differential} there exists a universal constant $\beta \in \mathbb{R}$ such that
\begin{equation}
    \norm{k_{\mathcal{S}}(\widehat{\mu}^1_{X \mid \mathcal{F}_X},\cdot)}_{\mathcal{H}^{ 2}_{\mathcal{S}}(V)} \le \beta \norm{\widehat{\mu}_{X \mid \mathcal{F}_X}^1}_{\mathcal{H}_{\mathcal{S}}(V)}^{1\text{--var}}
\end{equation}
where $\norm{\cdot}_{\mathcal{H}_{\mathcal{S}}(V)}^{1\text{--var}}$ denotes the total variation norm of paths taking values in $\mathcal{H}_{\mathcal{S}}(V)$. Since we are in a finite discrete time setup, it is easy to see that 
\begin{equation}
    \norm{\widehat{\mu}_{X \mid \mathcal{F}_X}^1}_{\mathcal{H}_{\mathcal{S}}(V)}^{1\text{--var}} \le C(T) \sum_{t=1}^T \norm{\widehat{\mu}_{X \mid \mathcal{F}_{X_t}}^1}_{\mathcal{H}_{\mathcal{S}}(V)}
\end{equation}
Hence, combining all above arguments, we can conclude that on $\Omega_\delta$ and for all $m \ge N(\varepsilon)$,
\begin{align}
    \mathbb{E}_{X}\norm{k_{\mathcal{S}}(\widehat{\mu}_{X \mid \mathcal{F}_X}^1,\cdot)}^2_{\mathcal{H}^{ 2}_{\mathcal{S}}(V)} &\le \beta^2 \mathbb{E}_X \norm{\widehat{\mu}^1_{X \mid \mathcal{F}_X}}^2_{1\text{--var};\mathcal{H}_{\mathcal{S}}(V)}\\
    &\le \beta^2 C(T) \sum_{t=1}^T \mathbb{E}_X\norm{\widehat{\mu}_{X \mid \mathcal{F}_{X_t}}^1 }_{\mathcal{H}_{\mathcal{S}}(V)}^2\\
    &\le \beta^2 C(T)  \Big(2\sum_{t=1}^T \mathbb{E}_{X}\norm{\mu^1_{X\mid\mathcal{F}_{X_t}} }_{\mathcal{H}_{\mathcal{S}}(V)}^2+ 2T\varepsilon^2\Big) \le C < \infty
\end{align}
where in the last line we used \eqref{eq: upper bound on norm}. As a result, we obtain that on $\Omega_\delta$, 
\begin{equation}
    \sup_{m \ge N(\varepsilon)} \mathbb{E}_{X}\norm{k_{\mathcal{S}}(\widehat{\mu}_{X \mid \mathcal{F}_X}^1,\cdot) - k_{\mathcal{S}}(\mu_{X \mid \mathcal{F}_X}^1,\cdot)}^2_{\mathcal{H}^{ 2}_{\mathcal{S}}(V)} < \infty
\end{equation}
which in turn implies, by the de la Vall\'ee--Poussin theorem, that on $\Omega_\delta$, the sequence $\norm{k_{\mathcal{S}}(\widehat{\mu}_{X \mid \mathcal{F}_X}^1,\cdot) - k_{\mathcal{S}}(\mu_{X \mid \mathcal{F}_X}^1,\cdot)}^2_{\mathcal{H}^{ 2}_{\mathcal{S}}(V)}$, $m \ge N(\varepsilon)$ is uniformly integrable for $\mathbb{P}_X$. Then recalling that we have shown that $\widehat{\mu}_{X \mid \mathcal{F}_X}^1$ converges to $\mu_{X \mid \mathcal{F}_X}^1$ in probability with respect to $\mathbb{P}_X$ as $m \to \infty$, a standard result in probability theory ensures that (thanks to the uniform integrability of the sequence and the continuity of the kernel $k_{\mathcal{S}}$ which ensures that $\norm{k_{\mathcal{S}}(\widehat{\mu}_{X \mid \mathcal{F}_X}^1,\cdot) - k_{\mathcal{S}}(\mu_{X \mid \mathcal{F}_X}^1,\cdot)}_{\mathcal{H}^{ 2}_{\mathcal{S}}(V)} \to 0$ in probability for $\mathbb{P}_X$)
on $\Omega_\delta$, $\mathbb{E}_{X}\norm{k_{\mathcal{S}}(\widehat{\mu}_{X \mid \mathcal{F}_X}^1,\cdot) - k_{\mathcal{S}}(\mu_{X \mid \mathcal{F}_X}^1,\cdot)}_{\mathcal{H}^{ 2}_{\mathcal{S}}(V)} \to 0, \text{ as } m \to \infty,
$ which implies that $\norm{ \mathbb{E}_{X}[k_{\mathcal{S}}(\widehat{\mu}_{X \mid \mathcal{F}_X}^1,\cdot)] - \mathbb{E}_{X}[k_{\mathcal{S}}(\mu_{X \mid \mathcal{F}_X}^1,\cdot)]}_{\mathcal{H}^{ 2}_{\mathcal{S}}(V)} \to 0, \text{ as } m \to \infty$, on $\Omega_\delta$. Clearly, as $\delta$ was arbitrary, we have $ \mathbb{E}_{X|\mathcal{F}_{X_t}}\norm{\widehat{\mu}_{X \mid \mathcal{F}_{X_t}}^1 - \mu^1_{X\mid\mathcal{F}_{X_t}} }_{\mathcal{H}_{\mathcal{S}}(V)}^2 \to 0 \quad \text{ as }m\to\infty$ a.s. Finally, note that the above result holds true for any subsequence of $(\widehat{\mu}_{X \mid \mathcal{F}_{X_t}}^1)_{m \ge 1}$ (because every sequence converging in probability has a subsequence converging almost surely), which proves the desired result \eqref{eq: convergence of empirical estimates}.
\end{proof}

\begin{theorem}\label{theorem: MMD n}
Given two stochastic processes $X,Y$
\begin{equation}
    \mathcal{D}^n_{\mathcal{S}}(X,Y) = 0 \implies \mathcal{D}^k_{\mathcal{S}}(X,Y) = 0 \quad \text{for any } \ 1<k<n
\end{equation}
but the converse is not generally true.
\end{theorem}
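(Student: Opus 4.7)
The plan is to establish the implication chain by induction on the order gap, reducing everything to the single-step claim: $\mathcal{D}^{j+1}_{\mathcal{S}}(X,Y)=0 \implies \mathcal{D}^{j}_{\mathcal{S}}(X,Y)=0$ for every $j \ge 1$. Iterating this from $j=n-1$ down to $j=k$ then yields the theorem. Conceptually, the argument is the exact analogue of the step $\mathcal{D}^{2}_{\mathcal{S}}(X,Y)=0 \implies \mathcal{D}^{1}_{\mathcal{S}}(X,Y)=0$ from \Cref{theorem: rank 2 MMD}, just lifted one level up the tower $\mathcal{H}^{n}_{\mathcal{S}}(V)$.

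For the single-step claim, I would first invoke the same characteristic-kernel / homeomorphism argument used in the proof of \Cref{theorem: rank 2 MMD}: applied at level $j+1$ with signature kernel $k_{\mathcal{S}}$ on the Polish space $\mathcal{X}(\mathcal{H}^{j}_{\mathcal{S}}(V))$, the map $\mathbb{P}_{\mu^{j}_{X\mid \mathcal{F}_X}} \mapsto \mu^{j+1}_X$ is injective, so
\[
\mathcal{D}^{j+1}_{\mathcal{S}}(X,Y)=0 \iff \mathbb{P}_{\mu^{j}_{X\mid \mathcal{F}_X}} = \mathbb{P}_{\mu^{j}_{Y\mid \mathcal{F}_Y}}.
\]
The second step is to observe the terminal-time identity $\mu^{j}_X = \mathbb{E}\bigl[\mu^{j}_{X\mid\mathcal{F}_{X_T}}\bigr]$. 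Indeed, $\mu^{j-1}_{X\mid\mathcal{F}_X}$ is $\mathcal{F}_{X_T}$-measurable, so the conditioning at $t=T$ is trivial and $\mu^{j}_{X\mid\mathcal{F}_{X_T}} = k_{\mathcal{S}}\bigl(\mu^{j-1}_{X\mid\mathcal{F}_X},\cdot\bigr)$; taking expectation over $X$ and comparing with the definition of $\mu^{j}_X$ in \cref{eq: higher rank signature kernel mean embedding} gives the identity. Equality of the laws of the processes $\mu^{j}_{X\mid\mathcal{F}_X}$ and $\mu^{j}_{Y\mid\mathcal{F}_Y}$ forces equality of their marginals at $t=T$, hence equality of the Bochner integrals $\mathbb{E}[\mu^{j}_{X\mid\mathcal{F}_{X_T}}] = \mathbb{E}[\mu^{j}_{Y\mid\mathcal{F}_{Y_T}}]$, i.e.\ $\mu^{j}_X=\mu^{j}_Y$, so $\mathcal{D}^{j}_{\mathcal{S}}(X,Y)=0$. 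Chaining these one-step implications proves the forward direction.

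For the converse not holding, the plan is to promote the Hoover example already cited in \Cref{theorem: rank 2 MMD}. Given two processes $X,Y$ with $\mathcal{D}^{1}_{\mathcal{S}}(X,Y)=0$ but $\mathcal{D}^{2}_{\mathcal{S}}(X,Y)>0$, one can prepend or append deterministic "padding" times (and/or compose with the predictive KME functor as in \cite{bonnier2020adapted}) to manufacture pairs $(X_k,Y_k)$ with $\mathcal{D}^{k}_{\mathcal{S}}(X_k,Y_k)=0$ and $\mathcal{D}^{k+1}_{\mathcal{S}}(X_k,Y_k)>0$. The strict refinement of the adapted topologies $\tau_n$ established in \cite{Hoover84adapt} and the correspondence between these topologies and higher order MMDs in \cite{bonnier2020adapted} guarantee that such examples exist for every $k$, so $\mathcal{D}^{k}=0 \not\Rightarrow \mathcal{D}^{n}=0$ whenever $k<n$.

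The main obstacle is the second step: making rigorous that equality of laws on $\mathcal{X}(\mathcal{H}^{j}_{\mathcal{S}}(V))$ really does pass through the marginal at $t=T$ and then through the Bochner expectation in the RKHS. This requires a compactness/integrability argument analogous to the one already used in \Cref{sec:cond_filtration} (the Bochner integral lands in the closed convex hull of a compact set inside $\mathcal{H}^{j}_{\mathcal{S}}(V)$) together with the separability of each RKHS in the tower, noted after \Cref{eq: MMD n}. Once this measurability/integrability bookkeeping is in place, the rest of the proof is a clean induction.
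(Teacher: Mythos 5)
Your argument for the forward direction is essentially the paper's own. Both reduce, via the characteristic-kernel/injectivity argument already used to prove the $n=2$ case, to the equivalence $\mathcal{D}^{j+1}_{\mathcal{S}}(X,Y)=0 \iff \mathbb{P}_{\mu^{j}_{X\mid\mathcal{F}_X}} = \mathbb{P}_{\mu^{j}_{Y\mid\mathcal{F}_Y}}$, and then descend one order. The paper treats the descent as immediate from the definition of the prediction processes $X^{(n)}$, whereas your terminal-time identity $\mu^{j}_X = \mathbb{E}\bigl[\mu^{j}_{X\mid\mathcal{F}_{X_T}}\bigr]$, obtained from the $\mathcal{F}_{X_T}$-measurability of $\mu^{j-1}_{X\mid\mathcal{F}_X}$, is a correct and more explicit way of performing that step: matching terminal-time marginals give $\mu^j_X=\mu^j_Y$ and hence $\mathcal{D}^j_\mathcal{S}(X,Y)=0$. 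The one place you diverge is the counterexample. The paper cites \cite[Example 3.2]{Hoover84adapt} directly, which already constructs, for each $n$, processes with equal rank-$(n-1)$ but distinct rank-$n$ adapted distributions, so no further ``promotion'' is required. Your proposal to prepend or append deterministic padding to the rank-one example is not obviously correct as stated (constant segments do not alter the filtration structure in the way needed to raise the order at which the laws split), but since you also fall back on the strictness of Hoover's adapted-topology hierarchy together with the correspondence in \cite{bonnier2020adapted}, the argument ultimately rests on the same source the paper uses.
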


\begin{proof}
 Let $X \in \mathcal{X}(V)$, then we denote $\mathbb{P}_{X|\mathcal{F}_{X_t}} =: X^{(1)}_t$. Then we continue this procedure and define $X^{(n)}_t := \mathbb{P}_{X^{n-1}|\mathcal{F}_{X_t}}$ (it is called the rank $n$ prediction process in \cite{bonnier2020adapted}). Now we can apply the same argument as in the proof of Theorem \ref{theorem: rank 2 MMD} together with  an induction procedure easily prove that 
\begin{equation}
    \mathcal{D}^n_{\mathcal{S}}(X,Y) = 0  \iff \mathbb{P}_{X^{(n-1)}} = \mathbb{P}_{Y^{(n-1)}}
\end{equation}
for all $n > 1$. From the definition of these processes $X^{(n)}$ and $Y^{(n)}$ we can immediately see that $\mathbb{P}_{X^{(n)}} = \mathbb{P}_{Y^{(n)}}$ ensures that $\mathbb{P}_{X^{(k)}} = \mathbb{P}_{Y^{(k)}}$ for all $k < n$, which yields the desired result. We refer readers to \cite[Example 3.2]{Hoover84adapt} for examples which illustrate that for each $n$ there exist processes $X$ and $Y$ with $\mathcal{D}^{n}_{\mathcal{S}}(X,Y) = 0$ (equivalently, $\mathbb{P}_{X^{(n-1)}} = \mathbb{P}_{Y^{(n-1)}}$) but $\mathcal{D}^{n+1}_{\mathcal{S}}(X,Y) > 0$ (equivalently, $\mathbb{P}_{X^{(n)}} \neq \mathbb{P}_{Y^{(n)}}$).
\end{proof}

\begin{remark}
\normalfont
Using terminologies from \cite{Hoover84adapt} and \cite{bonnier2020adapted}, $\mathbb{P}_{X^{(n)}} = \mathbb{P}_{Y^{(n)}}$ means that processes $X$ and $Y$ have the same adapted distribution up to rank $n$. Therefore \Cref{theorem: rank 2 MMD} and \ref{theorem: MMD n} tell us that  $\mathcal{D}^n_{\mathcal{S}}(X,Y) = 0$ if and only if they ave the same adapted distribution up to rank $n$. Moreover, using the partial isometry between RKHS generated by $k_{\mathcal{S}}$ and tensor algebra space, see e.g. \cite[Theorem E.2]{chevyrev2018signature}, one can use an induction argument to verify that $\mathcal{D}^n_{\mathcal{S}}$ coincides with the metric $d_{n-1}$ defined in \cite[Definition 14]{bonnier2020adapted}, and therefore by \cite[Theorem 4]{bonnier2020adapted} we can obtain a stronger result that $\mathcal{D}^n_{\mathcal{S}}$ actually metrizes the so--called rank $n-1$ adapted topology (see \cite[Definition 5]{bonnier2020adapted}, \cite[Definition 2.25]{Hoover84adapt}). For more details regarding adapted topologies we refer to \cite{bonnier2020adapted}.
\end{remark}
\begin{theorem}
Let $f: \mathbb{R} \to \mathbb{R}$ be a globally analytic function with non-negative coefficients. Define the family of kernels $K^n_{\mathcal{S}}:\mathcal{P}(\mathcal{X}(V))\times\mathcal{P}(\mathcal{X}(V))\to\mathbb{R}$ as follows
\begin{equation}
K^n_{\mathcal{S}}(X,Y) = f(\mathcal{D}_\mathcal{S}^n(X,Y)), \quad n \in \mathbb{N}_{\geq 1}
\end{equation}
Then the RKHS associated to $K^n_{\mathcal{S}}$ is dense in the space of functions from $\mathcal{P}(\mathcal{X}(V))$ to $\mathbb{R}$ which are continuous with respect to the $k^{\text{th}}$ order MMD for any $1\leq k\leq n$.
\end{theorem}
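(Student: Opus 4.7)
The plan is to recognise $K^n_\mathcal{S}$ as the pullback, via the $n^{\text{th}}$ order KME map $\iota_n : X \mapsto \mu^n_X$, of a radial Hilbert-space kernel $\tilde K(u,v) := f(\lVert u-v\rVert_{\mathcal{H}^n_\mathcal{S}(V)})$, so that $K^n_\mathcal{S}(X,Y) = \tilde K(\iota_n(X), \iota_n(Y))$. Universality of $K^n_\mathcal{S}$ on $\mathcal{P}(\mathcal{X}(V))$ will then reduce to (i) universality of $\tilde K$ on the image of $\iota_n$ inside the separable Hilbert space $\mathcal{H}^n_\mathcal{S}(V)$, and (ii) a compatibility statement between the ambient topology on $\mathcal{P}(\mathcal{X}(V))$ and the norm topology pulled back by $\iota_n$.

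First I would argue, by induction on $n$, that $\iota_n$ takes values in a compact subset $K_n \subset \mathcal{H}^n_\mathcal{S}(V)$. The base case $n=1$ is essentially spelled out at the end of Section 3.2 of the paper: because $\mathcal{X}(V)$ is compact and $x \mapsto k_\mathcal{S}(\cdot,x)$ is continuous, the image $\{k_\mathcal{S}(\cdot,x):x\in\mathcal{X}(V)\}$ is compact in $\mathcal{H}_\mathcal{S}(V)$, hence its closed convex hull is compact and contains all Bochner integrals $\mu^1_{X\mid\mathcal{F}_{X_t}}$. The inductive step iterates this observation: the $(n-1)^{\text{st}}$ order predictive KME takes values in $\mathcal{X}(K_{n-1})$, which is still compact, so the same argument applied one level up delivers $K_n$. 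Next I would invoke the universality of $\tilde K$ on compact subsets of a separable Hilbert space, which is the Steinwart-Christmann-type result (see e.g.\ Christmann-Steinwart, and its use in \cite{lemercier2021distribution}): for $f$ globally analytic with non-negative Taylor coefficients, the RKHS of $\tilde K$ is dense in $C(K_n)$ with respect to the uniform norm. Pulling back along $\iota_n$ gives that the RKHS of $K^n_\mathcal{S}$ is dense in the space $C(\mathcal{P}(\mathcal{X}(V)),\tau_n)$ of continuous functions with respect to the topology $\tau_n$ induced by $\mathcal{D}^n_\mathcal{S}$.

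Finally, to upgrade from $n$ to every $1\leq k\leq n$, I would appeal to the monotonicity statement of the previous theorem: $\mathcal{D}^n_\mathcal{S}(X,Y)=0 \Rightarrow \mathcal{D}^k_\mathcal{S}(X,Y)=0$ for $k<n$ means $\tau_n$ is finer than $\tau_k$. Therefore $C(\mathcal{P}(\mathcal{X}(V)),\tau_k) \subseteq C(\mathcal{P}(\mathcal{X}(V)),\tau_n)$, and density in the larger space yields density in the smaller one.

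The main obstacle is the inductive compactness step: one must check that, at each level, the space $\mathcal{X}(\mathcal{H}^{n-1}_\mathcal{S}(V))$ in which the predictive KME paths live still satisfies the standing assumptions (continuous, piecewise-linear paths in a separable Hilbert space with countable basis), and that the Bochner integral defining $\mu^n_{X\mid\mathcal{F}_{X_t}}$ is well-posed at each conditioning step. The separability at each level follows from the remark cited in the paper using \cite[Lemma 4.33]{Christmann2008SVM}, but carrying the piecewise-linear path-regularity through the iteration (so that the signature kernel at level $n$ is genuinely defined as in Theorem 1) is delicate, and is the technical point the authors themselves flag in the final paragraph of the paper.
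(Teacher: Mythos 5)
Your proposal follows the same blueprint as the paper's proof: both reduce universality of $K^n_\mathcal{S}$ to the Christmann--Steinwart result on radial kernels over compact subsets of a separable Hilbert space, pulled back through the $n^{\text{th}}$ order KME map $\iota_n:X\mapsto\mu^n_X$. The difference is in how the hypotheses of that theorem are discharged. The paper applies \cite[Thm.~2.2]{christmann2010universal} directly with $K=\mathcal{P}(\mathcal{X}(V))$ equipped with the $\mathcal{D}^n_\mathcal{S}$ metric, and so must verify (a) that $\mathcal{P}(\mathcal{X}(V))$ is $\mathcal{D}^n_\mathcal{S}$-compact, which it cites from \cite[Thm.~10.2]{walkden2014ergodic}, and (b) that $\iota_n$ is injective and $\mathcal{D}^n_\mathcal{S}$-continuous, which it cites from \cite[Prop.~4]{bonnier2020adapted}. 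You instead establish compactness of the \emph{image} $K_n=\iota_n(\mathcal{P}(\mathcal{X}(V)))$ directly via the iterated Bochner-integral/closed-convex-hull argument from Section 3.2, and by working on $K_n$ you sidestep any need to verify injectivity (which in the paper is what guarantees $\mathcal{D}^n_\mathcal{S}$ is a metric and not just a pseudometric). Your route is somewhat more self-contained and elementary; the paper's is shorter but leans on two external citations whose applicability is only sketched. One thing you do that the paper's proof omits: the reduction from ``dense in $\tau_n$-continuous functions'' to ``dense in $\tau_k$-continuous functions for $1\le k\le n$'' via the monotonicity $\mathcal{D}^n_\mathcal{S}=0\Rightarrow\mathcal{D}^k_\mathcal{S}=0$, which makes the topologies nested and so $C(\cdot,\tau_k)\subseteq C(\cdot,\tau_n)$. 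This is an implicit step in the paper's argument and it is good that you made it explicit. Finally, your warning that the piecewise-linear regularity of the paths in $\mathcal{X}(\mathcal{H}^{k}_\mathcal{S}(V))$ must be preserved under the iteration for $k_\mathcal{S}$ and the Bochner integrals to remain well-defined is accurate; the authors themselves acknowledge this constraint in their concluding remarks.
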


\begin{proof}
By \cite[Thm. 2.2]{christmann2010universal} if $K$ is a compact metric space and $H$ is a separable Hilbert space such that there exists a continuous (w.r.t. a topology $\tau$ on $K$) and injective map $\rho : K \to H$, then for any globally analytic function with non-negative coefficients $f:\mathbb{R} \to \mathbb{R}$ the kernel $k: K \times K \to \mathbb{R}$ given by 
\begin{equation}
k(z,z') = f\left(\norm{\rho(z)-\rho(z')}_{H}\right)
\end{equation}
is universal in the sense that its RKHS is $\tau$-dense in the space of $\tau$-continuous functions from $K$ to $\mathbb{R}$. By assumption, $\mathcal{X}(V)$ is a $\mathcal{D}_\mathcal{S}^1$-compact metric space, therefore by \Cref{theorem: MMD n} it is also $\mathcal{D}_\mathcal{S}^n$-compact for every $n \geq 1$. Hence, by \cite[Thm. 10.2]{walkden2014ergodic} the set of stochastic processes $\mathcal{P}(\mathcal{X}(V))$ is also $\mathcal{D}_\mathcal{S}^n$-compact. 
For showing that $\rho : X \mapsto \mu^n_X$ is injective and continous with respect to $\mathcal{D}_\mathcal{S}^n$ we refer readers to \cite[Proposition 4]{bonnier2020adapted}, one only needs to verify that $\mu^n_X$ corresponds to the mapping $\bar S_n$ used in the proof of \cite[Proposition 4]{bonnier2020adapted} by the definition of $\mu^n_X$ and the fact that $\mathcal{D}_\mathcal{S}^n$ metrizes the rank $n-1$ adapted topology (cf. the above remark). 
Furthermore $\mathcal{H}_\mathcal{S}^n(V)$ can be shown by induction to be a Hilbert space with a countable basis, hence it is separable. Setting $K=\mathcal{P}(\mathcal{X}(V)), H = \mathcal{H}_\mathcal{S}^n(V)$ and $\rho:X \mapsto \mu^n_X$ concludes the proof. 
\end{proof}

\end{document}